\newcommand{\cut}[1]{{}}
\newcommand{\va}{{\mathbf{a}}}
\newcommand{\vb}{{\mathbf{b}}}
\newcommand{\ve}{{\mathbf{e}}}
\newcommand{\vg}{{\mathbf{g}}}
\newcommand{\vh}{{\mathbf{h}}}
\newcommand{\vq}{{\mathbf{q}}}
\newcommand{\vs}{{\mathbf{s}}}
\newcommand{\vx}{{\mathbf{x}}}
\newcommand{\vy}{{\mathbf{y}}}
\newcommand{\vA}{{\mathbf{A}}}
\newcommand{\vR}{{\mathbf{R}}}
\newcommand{\vV}{{\mathbf{V}}}
\newcommand{\cD}{{\mathcal{D}}}
\newcommand{\EE}{{\mathbb{E}}}
\newcommand{\RR}{\mathbb{R}}
\newcommand{\sign}{\mathrm{sign}}
\newcommand{\vzero}{\mathbf{0}}
\newcommand{\prox}{\mathbf{prox}}
\let\@@span\span
\def\sp@n{\@@span\omit\advance\@multicnt\m@ne}
\DeclareMathOperator*{\Min}{minimize}
\DeclarePairedDelimiter{\dotp}{\langle}{\rangle}
\newcommand{\bc}{\begin{center}}
\newcommand{\ec}{\end{center}}
\newcommand{\bdm}{\begin{displaymath}}
\newcommand{\edm}{\end{displaymath}}
\newcommand{\beq}{\begin{equation}}
\newcommand{\eeq}{\end{equation}}
\newcommand{\bfl}{\begin{flushleft}}
\newcommand{\efl}{\end{flushleft}}
\newcommand{\bt}{\begin{tabbing}}
\newcommand{\et}{\end{tabbing}}
\newcommand{\beqn}{\begin{align}}
\newcommand{\eeqn}{\end{align}}
\newcommand{\beqs}{\begin{align*}} 
\newcommand{\eeqs}{\end{align*}}  
\newtheorem{theorem}{Theorem}
\newtheorem{assumption}{Assumption}
\newtheorem{corollary}{Corollary}
\newtheorem{remark}{Remark}
\newtheorem{lemma}{Lemma}
\title{\textbf{A Double Residual Compression Algorithm \\ for Efficient Distributed Learning}}
\author[1]{Xiaorui Liu}
\author[2]{Yao Li}
\author[1]{Jiliang Tang}
\author[3]{Ming Yan}
\affil[1]{Department of Computer Science and Engineering}
\affil[2]{Department of Mathematics}
\affil[3]{Department of Computational Mathematics, Science and Engineering} 
\affil[ ]{Michigan State University}
\affil[ ]{\{xiaorui, liyao6, tangjili, myan\}@msu.edu}
\date{}
\begin{document}
\maketitle

\graphicspath{{./figure/}}

\begin{abstract}
Large-scale machine learning models are often trained by parallel stochastic gradient descent algorithms. However, the communication cost of gradient aggregation and model synchronization between the master and worker nodes becomes the major obstacle for efficient learning as the number of workers and the dimension of the model increase. In this paper, we propose DORE, a DOuble REsidual compression stochastic gradient descent algorithm, to reduce over $95\%$ of the overall communication such that the obstacle can be immensely mitigated. Our theoretical analyses demonstrate that the proposed strategy has superior convergence properties for both strongly convex and nonconvex objective functions. 
The experimental results validate that DORE achieves the best communication efficiency while maintaining similar model accuracy and convergence speed in comparison with start-of-the-art baselines.
\end{abstract}

\section{Introduction}
Stochastic gradient algorithms~\citep{sgd2010} are efficient at minimizing the objective function $f: \mathbb{R}^d \rightarrow \mathbb{R}$ which is usually defined as $f(\vx):=\EE_{\xi\sim \cD}[\ell(\vx,\xi)]$, where $\ell(\vx,\xi)$ is the objective function defined on data sample $\xi$ and model parameter $\vx$. 
A basic stochastic gradient descent (SGD) repeats the gradient ``descent'' step
$\vx^{k+1} = \vx^{k} - \gamma \vg(\vx^{k})$
where $\vx_{k}$ is the current iteration and $\gamma$ is the step size. 
The stochastic gradient $\vg(\vx^k)$ is computed based on an i.i.d. sampled mini-batch from the distribution of the training data $\cD$ and serves as the estimator of the full gradient $\nabla f(\vx^k)$.
In the context of large-scale machine learning, the number of data samples and the model size are usually very large. 
Distributed learning utilizes a large number of computers/cores to perform the stochastic algorithms aiming at reducing the training time.
It has attracted extensive attention due to the demand for highly efficient model training~\citep{tensorflow16, mxnetpaper, Li:2014:SDM:2685048.2685095, You:2018:ITM:3225058.3225069}. 

In this paper, we focus on the data-parallel SGD~\citep{Dean:2012:LSD:2999134.2999271,NIPS2015_5751,NIPS2010_4006}, which provides a scalable solution to speed up the training process by distributing the whole data to multiple computing nodes. 
The objective can be written as:
\begin{equation*}\label{pb1} 
\textstyle\Min\limits_{\vx\in\RR^{d}}f(\vx)+R(\vx)=\frac{1}{n}\sum\limits_{i=1}^{n} \underbrace{\EE_{\xi\sim \cD_{i}}[\ell(\vx,\xi)]}_{\coloneqq f_{i}(\vx)} +R(\vx),
\end{equation*} 
where each $f_{i}(\vx)$ is a local objective function of the worker node $i$ defined based on the allocated data under distribution $\cD_{i}$ and $R:\RR^{d}\rightarrow\RR$ is usually a closed convex regularizer. 

In the well-known parameter server framework~\citep{Li:2014:SDM:2685048.2685095, NIPS2010_4006},
during each iteration, each worker node evaluates its own stochastic gradient $\{\widetilde\nabla f_{i}(\vx^k)\}_{i=1}^n$ and send it to the master node, which collects all gradients and calculates their average $(1/n)\sum_{i=1}^n \widetilde\nabla f_{i}(\vx^k)$. 
Then the master node further takes the gradient descent step with the averaged gradient and broadcasts the new model parameter $\vx^{k+1}$ to all worker nodes. 
It makes use of the computational resources from all nodes. In reality, the network bandwidth is often limited. Thus, the communication cost for the gradient transmission and model synchronization becomes the dominating bottlenecks as the number of nodes and the model size increase, which hinders the scalability and efficiency of SGD.

One common way to reduce the communication cost is to compress the gradient information by either gradient sparsification or quantization~\citep{alistarh2017qsgd, 1-bit-sgd, Stich:2018:SSM:3327345.3327357, Strom2015ScalableDD,pmlr-v70-wang17f,  Wangni:2018:GSC:3326943.3327063, wen2017terngrad, pmlr-v80-wu18d} such that many fewer bits of information are needed to be transmitted. 
However, little attention has been paid on how to reduce the communication cost for model synchronization and the corresponding theoretical guarantees. 
Obviously, the model shares the same size as the gradient, so does the communication cost. 
Thus, merely compressing the gradient can reduce at most 50\% of the communication cost, which suggests the importance of model compression.
Notably, the compression of model parameters is much more challenging than gradient compression.
One key obstacle is that its compression error cannot be well controlled by the step size $\gamma$ and thus it cannot diminish like that in the gradient compression~\citep{DBLP:conf/nips/TangGZZL18}. 
In this paper, we aim to bridge this gap by investigating algorithms to compress the full communication in the optimization process and understanding their theoretical properties. Our contributions can be summarized as:
\begin{itemize}[leftmargin=0.2in]
    \item We proposed DORE, which can compress both the gradient and the model information such that more than $95\%$ of the communication cost can be reduced.
    \item We provided theoretical analyses to guarantee the convergence of DORE under strongly convex and nonconvex assumptions without the bounded gradient assumption. 
    \item Our experiments demonstrate the superior efficiency of DORE comparing with the state-of-art baselines without degrading the convergence speed and the model accuracy.
\end{itemize}

\section{Background} 
Recently, many works try to reduce the communication cost to speed up the distributed learning, especially for deep learning applications, where the size of the model is typically very large (so is the size of the gradient) while the network bandwidth is relatively limited. Below we briefly review relevant papers.

\textbf{Gradient quantization and sparsification.}
Recent works~\citep{alistarh2017qsgd,1-bit-sgd, wen2017terngrad, mishchenko2019distributed,signSGD-ICML18} have shown that the information of the gradient can be quantized into a lower-precision vector such that fewer bits are needed in communication without loss of accuracy.~\cite{1-bit-sgd} proposed 1Bit SGD that keeps the sign of each element in the gradient only. 
It empirically works well, and~\cite{signSGD-ICML18} provided theoretical analysis systematically. QSGD~\citep{alistarh2017qsgd} utilizes an unbiased multi-level random quantization to compress the gradient while Terngrad~\citep{wen2017terngrad} quantizes the gradient into ternary numbers $\{0,\pm1\}$.
In DIANA~\citep{mishchenko2019distributed}, the gradient difference is compressed and communicated contributing to the estimator of the gradient in the master node.

Another effective strategy to reduce the communication cost is sparsification.
\cite{Wangni:2018:GSC:3326943.3327063} proposed a convex optimization formulation to minimize the coding length of stochastic gradients. A more aggressive sparsification method is to keep the elements with relatively larger magnitude in gradients, such as top-k sparsification ~\citep{Stich:2018:SSM:3327345.3327357, Strom2015ScalableDD, aji-heafield-2017-sparse}.

\textbf{Model synchronization.} 
The typical way for model synchronization is to broadcast model parameters to all worker nodes. 
Some works~\citep{pmlr-v70-wang17f,jordanm} have been proposed to reduce model size by enforcing sparsity, but it cannot be applied to general optimization problems. 
Some alternatives including QSGD~\citep{alistarh2017qsgd} and ECQ-SGD~\citep{pmlr-v80-wu18d} choose to broadcast all quantized gradients to all other workers such that every worker can perform model update independently. However, all-to-all communication is not efficient since the number of transmitted bits increases dramatically in large-scale networks. 
DoubleSqueeze~\citep{tang2019doublesqueeze} applies compression on the averaged gradient with error compensation to speed up model synchronization.

\textbf{Error compensation.} 
~\cite{1-bit-sgd} applied error compensation on 1Bit-SGD and achieved negligible loss of accuracy empirically. Recently, error compensation was further studied~\citep{pmlr-v80-wu18d, Stich:2018:SSM:3327345.3327357, KarimireddyRSJ19feedback} to mitigate the error caused by compression. The general idea is to add the compressed error to the next compression step: 
\vspace{-0.in}
$$
\hat \vg = Q(\vg+\ve), ~~~
\ve = (\vg+\ve) - \hat \vg.
$$
However, to the best of our knowledge, most of the algorithms with error compensation~\citep{pmlr-v80-wu18d, Stich:2018:SSM:3327345.3327357, KarimireddyRSJ19feedback, tang2019doublesqueeze} need to assume bounded gradient, i.e., $\EE\|\vg\|^2 \leq B$, and the convergence rate depends on this bound. 

\textbf{Contributions of DORE.}
The most related papers to DORE are DIANA~\citep{mishchenko2019distributed} and DoubleSqueeze~\citep{tang2019doublesqueeze}. Similarly, DIANA compresses gradient difference on the worker side and achieves good convergence rate. However, it doesn't consider the compression in model synchronization, so at most 50\% of the communication cost can be saved. DoubleSqueeze applies compression with error compensation on both worker and server sides, but it only considers non-convex objective functions. Moreover, its analysis relies on a bounded gradient assumption, i.e., $\EE\|\vg\|^2 \leq B$, and the convergence error has a dependency on the gradient bound like most existed error compensation works.

In general, the uniform bound on the norm of the stochastic gradient is a strong assumption which might not hold in some cases. For example, it is violated in the strongly convex case~\citep{pmlr-v80-nguyen18c, gower2019sgd}. In this paper, we design DORE, the first algorithm which utilizes gradient and model compression with error compensation without assuming bounded gradients. 
Unlike existing error compensation works, we provide a linear convergence rate to the $\mathcal{O}(\sigma)$ neighborhood of the optimal solution for strongly convex functions and a sublinear rate to the stationary point for nonconvex functions with linear speedup. In Table~\ref{table}, we compare the asymptotic convergence rates of different quantized SGDs with DORE. 

\section{Double Residual Compression SGD}
\label{sec:alg}

In this section, we introduce the proposed \underline{DO}uble \underline{RE}sidual compression SGD (DORE) algorithm. 
Before that, we introduce a common assumption for the compression operator. 

In this work, we adopt an assumption from~\citep{alistarh2017qsgd, wen2017terngrad, mishchenko2019distributed} that the compression variance is linearly proportional to the magnitude. 

\begin{assumption}\label{ass:compression}
The stochastic compression operator $Q:\RR^{d}\rightarrow\RR^{d}$ is unbiased, i.e., $\EE Q(\vx)=\vx$ and satisfies
\begin{equation}\label{quan_var}
\EE \|Q(\vx)-\vx\|^{2}\leq C\|\vx\|^{2},
\end{equation}
for a nonnegative constant $C$ that is independent of $\vx$.
We use $\hat{\vx}$ to denote the compressed $\vx$, i.e., $\hat{\vx}\sim Q(\vx)$.
\end{assumption}

Many feasible compression operators can be applied to our algorithm since our theoretical analyses are built on this common assumption. 
Some examples of feasible stochastic compression operators include:
\begin{itemize}[leftmargin=0.2in]
    \item \textit{No Compression:} $C=0$ when there is no compression. 
    \item \textit{Stochastic Quantization:} A real number $x \in [a,b], (a<b)$ is set to be $a$ with probability $\frac{b-x}{b-a}$ and $b$ with probability $\frac{x-a}{b-a}$, where $a$ and $b$ are predefined quantization levels~\citep{alistarh2017qsgd}. It satisfies Assumption~\ref{ass:compression} when $ab>0$ and $a<b$. 
    \item \textit{Stochastic Sparsification:} A real number $x$ is set to be 0 with probability $1-p$ and $\frac{x}{p}$ with probability $p$~\citep{wen2017terngrad}. It satisfies Assumption~\ref{ass:compression} with $C=(1/p)-1$.
    \item \textit{$p$-norm Quantization:} A vector $\vx$ is quantized element-wisely by $Q_{p}(\vx)=\|\vx\|_{p}~\sign(\vx)\circ \xi$, where $\circ$ is the Hadamard product and $\xi$ is a Bernoulli random vector satisfying $\xi_{i}\sim \mbox{Bernoulli}(\frac{|x_{i}|}{\|\vx\|_{p}})$. It satisfies Assumption~\ref{ass:compression} with $C=\max_{\vx\in\RR^d} \frac{\|\vx\|_1\|\vx\|_p}{\|\vx\|_2^2}-1$~\citep{mishchenko2019distributed}.
    To decrease the constant $C$ for a higher accuracy, we can further decompose a vector $\vx \in\RR^d$ into blocks, i.e., $\vx=(\vx(1)^{\top}, \vx(2)^{\top}, \cdots, \vx(m)^{\top})^{\top}$ with $\vx(l)\in\RR^{d_{l}}$ and $\sum_{l=1}^{m}d_{l}=d$, and compress the blocks independently.
\end{itemize}

\subsection{The Proposed DORE}

\begin{figure}
   \begin{center}
    \includegraphics[width=0.6\textwidth]{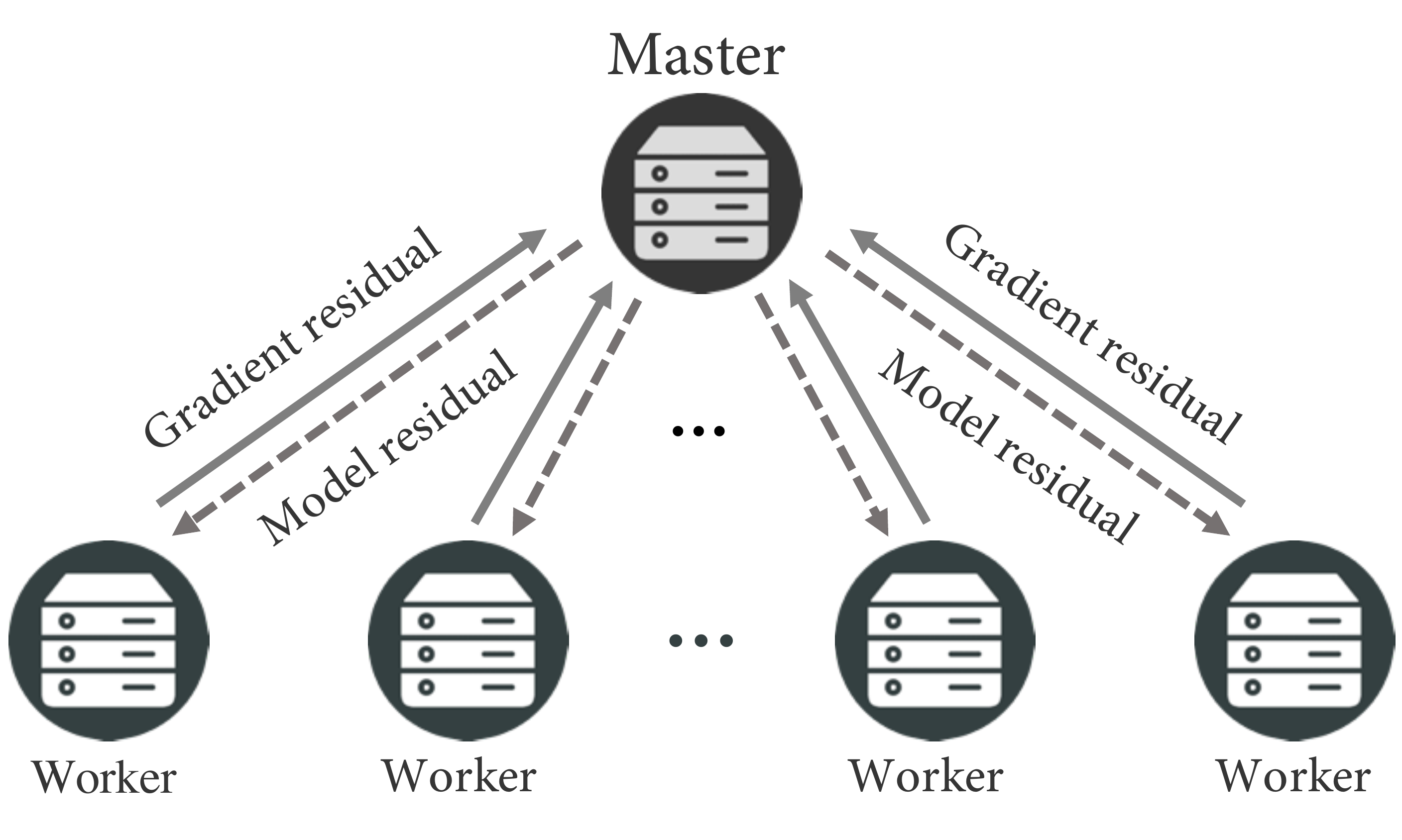}
  \end{center}
  \caption{An Illustration of DORE}
  \label{fig:ps_wk}
\end{figure}

Many previous works~\citep{alistarh2017qsgd, 1-bit-sgd,wen2017terngrad} reduce the communication cost of P-SGD by quantizing the stochastic gradient before sending it to the master node, but there are several intrinsic issues. 

First, these algorithms will incur extra optimization error intrinsically. 
Let's consider the case when the algorithm converges to the optimal point $\vx^*$ where we have $(1/n)\sum_{i=1}^n \nabla f_{i}(\vx^*)=\vzero$. 
However, the data distributions may be different for different worker nodes in general, and thus we may have $\nabla f_{i}(\vx^*) \neq \nabla f_{j}(\vx^*), \forall i, j\in \{1,\dots,n\}$ and $i\neq j$. 
In other words, each individual $\nabla f_i(\vx^*)$ may be far away from zero. This will cause large compression variance according to Assumption~\ref{ass:compression}, which indicates that the upper bound of compression variance $\EE\|Q(\vx)-\vx\|^{2}$ is linearly proportional to the magnitude of $\vx$. 

Second, most existing algorithms~\citep{1-bit-sgd, alistarh2017qsgd, wen2017terngrad, signSGD-ICML18, pmlr-v80-wu18d, mishchenko2019distributed} need to broadcast the model or gradient to all worker nodes in each iteration. It is a considerable bottleneck for efficient optimization since the amount of bits to transmit is the same as the uncompressed gradient. DoubleSqueeze~\citep{tang2019doublesqueeze} is able to apply compression on both worker and server sides. However, its analysis depends on a strong assumption on bounded gradient. Meanwhile, no theoretical guarantees are provided for the convex problems.

\begin{algorithm*}[!ht]
\caption{The Proposed DORE.\protect\footnotemark}
\label{algocomplete}
\small
\begin{algorithmic}[1]
\STATE \textbf{Input:} Stepsize $\alpha, \beta, \gamma, \eta$, initialize $\vh^0 = \vh_i^0 = \vzero^{d}$, $\hat{\vx}_i^0 = \hat{\vx}^0, ~\forall i\in\{1,\dots, n\}$. \label{input1}
\FOR{$k=1,2,\cdots, K-1$}
\vspace{0.02in}
\begin{minipage}[t]{0.48\textwidth}
    \STATE \textbf{For each worker} $i \in \{1,2,\cdots, n\}$:
    \vspace{0.05in}
    \STATE Sample $\vg_i^k$ such that $\EE [\vg_i^{k}|\hat{\vx}_i^{k}]=\nabla f_i(\hat{\vx}_i^k)$
    \STATE Gradient residual: $\Delta_i^k = \vg_i^k - \vh_i^k $\label{ite1}
    \STATE Compression: $\hat{\Delta}_i^k = Q(\Delta_i^k)$
    \STATE $\vh_i^{k+1} = \vh_i^k + \alpha \hat{\Delta}_i^k$\label{ite3}
    \STATE \{~$\hat{\vg}_i^k = \vh_i^k + \hat{\Delta}_i^k$~\}
    \STATE Send $\hat{\Delta}_i^k$ to the master 
    \STATE Receive $\hat{\vq}^k$ from the master
    \STATE $\hat{\vx}_i^{k+1} = \hat{\vx}_i^k + \beta\hat{\vq}^k$
\end{minipage}
\begin{minipage}[t]{0.42\textwidth}
\STATE \textbf{For the master}:
    \vspace{0.05in}
    \STATE Receive $\{\hat{\Delta}_{i}^{k}\}$ from workers
    \STATE $\hat{\Delta}^k = 1/n\sum_i^n {\hat{\Delta}_i^k}$
    \STATE $\hat{\vg}^k = \vh^k + \hat{\Delta}^k ~~\{=1/n \sum_i^n {\hat{\vg}_i^k}~\} $
    \STATE $\vx^{k+1} = \prox_{\gamma R}(\hat{\vx}^{k} - \gamma \hat{\vg}^k)$
    \STATE $\vh^{k+1} = \vh^k + \alpha \hat{\Delta}^k$ 
    \STATE Model residual: $\vq^k = \vx^{k+1} -\hat{\vx}^k + \eta \ve^k$
    \STATE Compression: $\hat{\vq}^k = Q(\vq^k)$
    \STATE $\ve^{k+1} = \vq^k - {\hat \vq}^k$ 
    \STATE $\hat{\vx}^{k+1}=\hat{\vx}^k+\beta \hat{\vq}^{k}$
    \STATE Broadcast $\hat{\vq}^{k}$ to workers
\end{minipage}
\vspace{-0.1in}
\ENDFOR
\STATE \textbf{Output:} $\hat{\vx}^{K}$ or any $\hat{\vx}_i^{K}$ 
\end{algorithmic}
\normalsize
\end{algorithm*}

We proposed DORE to address all aforementioned issues. Our motivation is that the gradient should change smoothly for smooth functions so that each worker node can keep a state variable $\vh_i^k$ to track its previous gradient information. 
As a result, the residual between new gradient and the state $\vh_i^k$ should decrease, and the compression variance of the residual can be well bounded. 
On the other hand, as the algorithm converges, the model would only change slightly. 
Therefore, we propose to compress the model residual such that the compression variance can be minimized and also well bounded. We also compensate the model residual compression error into next iteration to achieve a better convergence. Due to the advantages of the proposed double residual compression scheme, we can derive the fastest convergence rate through analyses without the bounded gradient assumption. 
Below are some key steps of our algorithm as showed in Algorithm~\ref{algocomplete} and Figure~\ref{fig:ps_wk}:
\begin{itemize}[leftmargin=0.2in]
    \item[] {\bf [lines 4-9]:} each worker node sends the compressed gradient residual ($\hat \Delta_i^k$) to the master node and updates its state $\vh_i^k$ with $\hat \Delta_i^k$;
    \item[] {\bf [lines 13-15]:} the master node gathers the compressed gradient residual $(\{\hat \Delta_i^k)\}$ from all worker nodes and recovers the averaged gradient $\hat \vg^k$ based on its state $\vh^k$; 
    \item[] {\bf [lines 16]:} the master node applies gradient descent algorithms (possibly with the proximal operator);
    \item[] {\bf [lines 18-22]:} the master node broadcasts the compressed model residual with error compensation ($\hat \vq^k$) to all worker nodes and updates the model;  
    \item[] {\bf [lines 10-11]:} each worker node receives the compressed model residual ($\hat \vq^k$) and updates its model $\vx_i^k$.
\end{itemize}

In the algorithm, the state $\vh_i^k$ serves as an exponential moving average of the local gradient in expectation, i.e., $\EE_{Q}\vh_{i}^{k+1}= (1-\alpha)\vh_{i}^{k}+\alpha \vg_{i}^{k}$, as proved in Lemma~\ref{lem3}. 
Therefore, as the iteration approaches the optimum, $\vh_i^k$ will also approach the local gradient $\nabla f_i(\vx^*)$ rapidly which contributes to small gradient residual and consequently small compression variance. 
Similar difference compression techniques are also proposed in DIANA and its variance-reduced variant~\citep{mishchenko2019distributed, VRdiana}. 

\footnotetext{Equations in the curly bracket are just notations for the proof but does not need to computed actually.}

\subsection{Discussion}
In this subsection, we provide more detailed discussions about DORE including model initialization, model update, the special smooth case as well as the compression rate of communication.

{\bf{Initialization.}} It is important to take the identical initialization $\hat \vx^0$ for all worker and master nodes.
It is easy to be ensured by either setting the same random seed or broadcasting the model once at the beginning.
In this way, although we don't need to broadcast the model parameters directly, every worker node updates the model $\hat \vx^k$ in the same way. Thus we can keep their model parameters identical. 
Otherwise, the model inconsistency needs to be considered. 

{\bf{Model update.}} It is worth noting that although we can choose an accurate model $\vx^{k+1}$ as the next iteration in the master node, we use $\hat \vx^{k+1}$ instead. In this way, we can ensure that the gradient descent algorithm is applied based on the exact stochastic gradient which is evaluated on $\hat \vx_i^k$ at each worker node. This dispels the intricacy to deal with inexact gradient evaluated on $\vx^{k}$ and thus it simplifies the convergence analysis. 

{\bf{Smooth case.}} In the smooth case, i.e., $R=0,$ Algorithm~\ref{algocomplete} can be simplified.  The master node quantizes the recovered averaged gradient with error compensation and broadcasts it to all worker nodes. The details of this simplified case can be found in Appendix~\ref{sec:smoothcase}.

{\bf{Compression rate.}} The compression of the gradient information can reduce at most $50\%$ of the communication cost since it only considers compression during gradient aggregation while ignoring the model synchronization. However, DORE can further cut down the remaining $50\%$ communication. 

Taking the blockwise $p$-norm quantization as an example, every element of $\vx$ can be represented by $\frac{3}{2}$ bits using the simple ternary coding $\{0,\pm1\}$, along with one magnitude for each block. 
For example, if we consider the uniform block size $b$, the number of bits to represent a $d$-dimension vector of $32$ bit float-point numbers can be reduced from $32d$ bits to $32\frac{d}{b}+\frac{3}{2}d$ bits. As long as the block size $b$ is relatively large with respect to the constant $32$, the cost $32\frac{d}{b}$ for storing the float-point number is relatively small such that the compression rate is close to $32d/({\frac{3}{2}}d) \approx 21.3$ times (for example, $19.7$ times when $b=256$). 

Applying this quantization, QSGD, Terngrad, MEM-SGD, and DIANA need to transmit $(32d+32\frac{d}{b}+\frac{3}{2}d)$ bits per iteration and thus they are able to cut down $47\%$ of the overall $2\times 32d$ bits per iteration through gradient compression when $b=256$. But with DORE, we only need to transmit $2(32\frac{d}{b}+\frac{3}{2}d)$ bits per iteration. Thus DORE can reduce over $95\%$ of the total communication by compressing both the gradient and model transmission. More efficient coding techniques such as Elias coding~\citep{eliascoding} can be applied to further reduce the number of bits per iteration.

\section{Convergence Analysis}
\label{sec:theory}
To show the convergence of DORE, we make the following commonly used assumptions.

\begin{assumption}\label{asm1} Each worker node samples an unbiased estimator of the gradient stochastically with bounded variance, i.e.,
for $i=1,2,\cdots, n$ and $\forall \vx\in\RR^{d}$,
\begin{equation}\label{gradient}
    \EE[\vg_{i}|\vx]=\nabla f_{i}(\vx),\quad \EE\|\vg_{i}-\nabla f_{i}(\vx)\|^{2}\leq \sigma_{i}^{2},
\end{equation}
where $\vg_{i}$ is the estimator of $\nabla f_i$ at $\vx$. In addition, we define $\sigma^{2}=\frac{1}{n}\sum_{i=1}^{n}\sigma_{i}^{2}$.
\end{assumption}

\begin{assumption}\label{asm2}Each $f_{i}$ is $L$-Lipschitz differentiable, i.e., for $i=1,2,\cdots, n$ and $\forall \vx,\vy\in\RR^{d}$,
\begin{equation}\label{lipschitz}
   \textstyle f_{i}(\vx)\leq f_{i}(\vy)+\dotp{\nabla f_{i}(\vy), \vx-\vy}+\frac{L}{2}\|\vx-\vy\|^{2}.
\end{equation}
\end{assumption}

\begin{assumption}\label{asm3} Each $f_{i}$ is $\mu$-strongly convex ($\mu\geq 0$), i.e., for $i=1,2,\cdots, n$ and $\forall \vx,\vy\in\RR^{d}$,
\begin{equation}\label{strcov}
   \textstyle f_{i}(\vx)\geq f_{i}(\vy)+\dotp{\nabla f_{i}(\vy), \vx-\vy}+\frac{\mu}{2}\|\vx-\vy\|^{2}.
\end{equation}
\end{assumption}

For simplicity, we use the same compression operator for all worker nodes, and the master node can apply a different compression operator.
We denote the constants in Assumption~\ref{ass:compression} as $C_q$ and $C_q^m$ for the worker and master nodes, respectively.
Then we set $\alpha$ and $\beta$ in both algorithms to satisfy
\begin{align}\label{alpha-beta}
   \textstyle { \frac{1- \sqrt{1-{4C_q(C_q+1)\over nc}}}{2(C_q+1)} \leq} &\ \textstyle{\alpha \leq  {1+ \sqrt{1-{4C_q(C_q+1)\over nc}}\over 2(C_q+1)}},\nonumber\\
   \textstyle{0<}&\ \textstyle{\beta\leq \frac{1}{C_{q}^m+1}}, 
   \end{align}
with $c\geq\frac{4C_{\vq}(C_{q}+1)}{n}$. 
We consider two scenarios in the following two subsections: $f$ is strongly convex with a convex regularizer $R$ and $f$ is non-convex with $R=0.$

\subsection{The strongly convex case}

\begin{theorem}\label{thm1}
Under Assumptions~\ref{ass:compression}-\ref{asm3}, if $\alpha$ and $\beta$ in Algorithm~\ref{algocomplete} satisfy~\eqref{alpha-beta}, $\eta$ and $\gamma$ satisfy
\begin{align}
& \eta <\min \textstyle\left({-C_q^m+ \sqrt{(C_q^m)^2+4(1-(C_q^m+1)\beta)}\over 2C_q^m},\right. \textstyle\left.\frac{4\mu L}{(\mu+L)^2(1+c\alpha)-4\mu L}\right), \\
& \textstyle{\eta(\mu+L)\over 2(1+\eta)\mu L} \leq \textstyle\gamma\leq {2\over (1+c\alpha)(\mu+L)},
\end{align}
then we have
\begin{align}\label{linear-conv}
\textstyle{\vV^{k+1}\leq \rho^k\vV^1 + \frac{(1+\eta)(1+nc\alpha)}{n(1-\rho)}\beta\gamma^2\sigma^2,}
\end{align}
with 
\begin{align*}
    \textstyle{\vV^k=}&\textstyle{\beta(1-(C_q^m+1)\beta)\EE\|\vq^{k-1}\|^2+\EE\|\hat{\vx}^{k}-\vx^{*}\|^{2}} + 
    \textstyle{+\frac{(1+\eta)c\beta\gamma^{2}}{n}\sum_{i=1}^{n}\EE\|\vh_{i}^{k}-\nabla f_{i}(\vx^{*})\|^{2}},
\end{align*}
and 
\begin{align*}
\rho =  &\textstyle\max \Big( \frac{(\eta^2+\eta)C_q^m}{1-(C_q^m+1)\beta}, 1+\eta\beta-\frac{2(1+\eta)\beta\gamma\mu L}{\mu+L}, 1-\alpha \Big) < 1.
\end{align*}

\end{theorem}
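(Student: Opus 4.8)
The plan is to prove a one-step drift inequality $\vV^{k+1}\le\rho\,\vV^k+b$ with $b=\frac{(1+\eta)(1+nc\alpha)}{n}\beta\gamma^2\sigma^2$, and then unroll it: $\vV^{k+1}\le\rho^k\vV^1+b\sum_{j\ge0}\rho^j=\rho^k\vV^1+\frac{b}{1-\rho}$, which is exactly~\eqref{linear-conv}. I would build the drift inequality by bounding the three pieces of $\vV^k$ separately, always taking expectations in the correct order over the fresh randomness at iteration $k$: first the worker compressions (constant $C_q$, independent across workers and unbiased by Assumption~\ref{ass:compression}), then the master compression (constant $C_q^m$), and the stochastic sampling (Assumption~\ref{asm1}), conditioning on the $\sigma$-algebra $\cF^k$ generated by everything through the computation of $\hat\vx^k,\vh_i^k,\ve^k$. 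Two bookkeeping facts from the algorithm are used throughout: because of the identical zero initialization on line~\ref{input1} and the matching $\vh$-updates, $\vh^k=\frac1n\sum_i\vh_i^k$ for all $k$; and because all nodes start from $\hat\vx^0$ and perform the same model update, $\hat\vx_i^k=\hat\vx^k$, so the stochastic gradient sampled on line~4 is unbiased for $\nabla f_i(\hat\vx^k)$ and hence $\EE[\hat\vg^k\mid\cF^k]=\nabla f(\hat\vx^k)$. I would also use the exponential-moving-average property of $\vh_i^k$ established in Lemma~\ref{lem3}.

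For the $\hat\vx$-part, expand $\|\hat\vx^{k+1}-\vx^*\|^2=\|\hat\vx^k-\vx^*+\beta\hat\vq^k\|^2$ and take the master-compression expectation using $\EE_Q\hat\vq^k=\vq^k$ and $\EE_Q\|\hat\vq^k\|^2\le(1+C_q^m)\|\vq^k\|^2$; adding the model-residual term $\beta(1-(C_q^m+1)\beta)\|\vq^k\|^2$ of $\vV^{k+1}$ collapses the two quadratic-in-$\vq^k$ contributions into a single clean $\beta\|\vq^k\|^2$. Substituting $\vq^k=(\vx^{k+1}-\hat\vx^k)+\eta\ve^k$ into this term and into $2\beta\langle\vq^k,\hat\vx^k-\vx^*\rangle$, the $\|\vx^{k+1}-\hat\vx^k\|^2$ contributions cancel exactly and the error-feedback cross terms combine into $2\beta\eta\langle\ve^k,\vx^{k+1}-\vx^*\rangle+\beta\eta^2\|\ve^k\|^2$; Young's inequality with weight $\eta$ turns this into $\beta\eta\|\vx^{k+1}-\vx^*\|^2+\beta\eta(1+\eta)\|\ve^k\|^2$, leaving the running coefficient of $\|\vx^{k+1}-\vx^*\|^2$ equal to $\beta(1+\eta)$. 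Since $\ve^k=\vq^{k-1}-\hat\vq^{k-1}$, Assumption~\ref{ass:compression} gives $\EE\|\ve^k\|^2\le C_q^m\EE\|\vq^{k-1}\|^2$, so $\beta\eta(1+\eta)\|\ve^k\|^2$ is dominated by the model-residual term of $\vV^k$ with ratio exactly $\frac{(\eta^2+\eta)C_q^m}{1-(C_q^m+1)\beta}$, the first entry of $\rho$; the first upper bound on $\eta$ is precisely what makes this ratio $<1$, and $\beta\le\frac1{C_q^m+1}$ keeps the residual coefficient nonnegative.

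Next I would bound the remaining $\beta(1+\eta)\|\vx^{k+1}-\vx^*\|^2$ with the prox-gradient step, using nonexpansiveness of $\prox_{\gamma R}$ around the fixed-point identity $\vx^*=\prox_{\gamma R}(\vx^*-\gamma\nabla f(\vx^*))$ together with Assumptions~\ref{asm2}--\ref{asm3}: $\EE\|\vx^{k+1}-\vx^*\|^2\le\|\hat\vx^k-\vx^*\|^2-2\gamma\langle\nabla f(\hat\vx^k)-\nabla f(\vx^*),\hat\vx^k-\vx^*\rangle+\gamma^2\|\nabla f(\hat\vx^k)-\nabla f(\vx^*)\|^2+\gamma^2\mathrm{Var}(\hat\vg^k)$, where the law of total variance and conditional unbiasedness of the compressions split $\mathrm{Var}(\hat\vg^k)\le\frac{\sigma^2}{n}+\frac{C_q}{n^2}\sum_i\EE\|\vg_i^k-\vh_i^k\|^2$. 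In parallel, expanding $\vh_i^{k+1}=\vh_i^k+\alpha\hat\Delta_i^k$ and taking worker-compression then sampling expectations yields the standard recursion $\EE\|\vh_i^{k+1}-\nabla f_i(\vx^*)\|^2\le(1-\alpha)\EE\|\vh_i^k-\nabla f_i(\vx^*)\|^2+\alpha\EE\|\nabla f_i(\hat\vx^k)-\nabla f_i(\vx^*)\|^2+\alpha\sigma_i^2-\alpha(1-\alpha(1+C_q))\EE\|\vg_i^k-\vh_i^k\|^2$, which I weight by $\frac{(1+\eta)c\beta\gamma^2}{n}$ to match the last piece of $\vV$. The positive $\sum_i\|\vg_i^k-\vh_i^k\|^2$ coming from $\mathrm{Var}(\hat\vg^k)$ and the negative $\sum_i\|\vg_i^k-\vh_i^k\|^2$ from the state recursion cancel precisely when $c\alpha(1-\alpha(1+C_q))\ge\frac{C_q}{n}$, i.e.\ exactly when $\alpha$ lies in the interval~\eqref{alpha-beta} and $c\ge\frac{4C_q(C_q+1)}{n}$ (this is the whole reason for that choice of $\alpha$). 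The leftover gradient-difference terms $\|\nabla f(\hat\vx^k)-\nabla f(\vx^*)\|^2$ and $\frac1n\sum_i\|\nabla f_i(\hat\vx^k)-\nabla f_i(\vx^*)\|^2$ are absorbed using co-coercivity of the $\nabla f_i$ (equivalently the Bregman bound $\frac1n\sum_i\|\nabla f_i(\hat\vx^k)-\nabla f_i(\vx^*)\|^2\le 2LD_f(\hat\vx^k,\vx^*)$) against the negative term produced by the combined strong-convexity/smoothness inequality in the descent step; here the upper bound $\gamma\le\frac2{(1+c\alpha)(\mu+L)}$ is what keeps the net coefficient of $\|\nabla f(\hat\vx^k)-\nabla f(\vx^*)\|^2$ nonpositive and drives the coefficient of $\|\hat\vx^k-\vx^*\|^2$ down to (at most) $1+\eta\beta-\frac{2(1+\eta)\beta\gamma\mu L}{\mu+L}$, which is $<1$ because of the lower bound $\gamma\ge\frac{\eta(\mu+L)}{2(1+\eta)\mu L}$ (and the two $\eta$-bounds together make this $\gamma$-interval nonempty, since $\frac{\eta(\mu+L)}{2(1+\eta)\mu L}\le\frac2{(1+c\alpha)(\mu+L)}$ is equivalent to $\eta\le\frac{4\mu L}{(\mu+L)^2(1+c\alpha)-4\mu L}$). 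The coefficient of the state term is $1-\alpha$, and the residual $\sigma^2$ pieces, $\frac{\beta(1+\eta)\gamma^2\sigma^2}{n}$ from the descent plus $\alpha(1+\eta)c\beta\gamma^2\sigma^2$ from the state recursion, add up to $\frac{(1+\eta)(1+nc\alpha)}{n}\beta\gamma^2\sigma^2$. Setting $\rho$ to the maximum of the three contraction factors closes the argument.

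The step I expect to be the real work is the error-compensation bookkeeping in the $\hat\vx$-part: making the $\|\vx^{k+1}-\hat\vx^k\|^2$ terms cancel and choosing the Young weight so that the residual cost is absorbed by the $\beta(1-(C_q^m+1)\beta)\|\vq^{k-1}\|^2$ entry of $\vV^k$ with exactly the coefficient $\eta^2+\eta$ appearing in $\rho$, rather than also charging $\|\hat\vx^k-\vx^*\|^2$ with a stray constant. The second delicate point is arranging the two $\sum_i\|\vg_i^k-\vh_i^k\|^2$ contributions to cancel outright (not merely be dominated), which is what pins down the precise admissible interval for $\alpha$ and the constraint on $c$. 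The remaining constant-chasing — verifying via the stated bounds on $\eta$ and $\gamma$ that all three factors entering $\rho$ are genuinely below $1$ and that the feasible parameter set is nonempty — is routine but must be done carefully.
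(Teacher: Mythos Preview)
Your proposal is correct and is essentially the paper's own proof. The paper obtains the same intermediate bound
\[
\beta(1-(C_q^m+1)\beta)\EE\|\vq^k\|^2+\EE\|\hat\vx^{k+1}-\vx^*\|^2
\le (1-\beta)\EE\|\hat\vx^k-\vx^*\|^2+(1+\eta)\beta\EE\|\vx^{k+1}-\vx^*\|^2+(\eta^2+\eta)\beta C_q^m\EE\|\vq^{k-1}\|^2
\]
via the convex-combination identity $\EE_Q[\hat\vx^{k+1}-\vx^*]=(1-\beta)(\hat\vx^k-\vx^*)+\beta(\vx^{k+1}-\vx^*+\eta\ve^k)$ plus the variance decomposition, whereas you expand $\|\hat\vx^k-\vx^*+\beta\hat\vq^k\|^2$ directly; the two routes are algebraically identical. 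The remaining steps you outline---Lemma~\ref{lem3} for the $\vh_i$ recursion, Lemma~\ref{lem4} for $\EE\|\hat\vg^k-\vh^*\|^2$, the choice of $\alpha$ to kill the $\sum_i\|\Delta_i^k\|^2$ term, the prox nonexpansiveness, and the per-$f_i$ strong-convexity/smoothness inequality $\langle\nabla f_i(\hat\vx^k)-\vh_i^*,\hat\vx^k-\vx^*\rangle\ge\frac{\mu L}{\mu+L}\|\hat\vx^k-\vx^*\|^2+\frac{1}{\mu+L}\|\nabla f_i(\hat\vx^k)-\vh_i^*\|^2$---are exactly what the paper uses. One cosmetic point: the ``co-coercivity / Bregman'' language you invoke for the absorption step is a slight red herring; what actually does the job (and what you also name) is the single combined $\mu$--$L$ inequality above, applied to each $f_i$ and averaged, which simultaneously produces the $-\frac{2(1+\eta)\beta\gamma\mu L}{\mu+L}$ contraction on $\|\hat\vx^k-\vx^*\|^2$ and the $-\frac{2(1+\eta)\beta\gamma}{\mu+L}$ sink for $\frac1n\sum_i\|\nabla f_i(\hat\vx^k)-\vh_i^*\|^2$.
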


\begin{corollary}\label{cor1}
When there is no error compensation and we set $\eta=0$, then $\rho=\max (1-\frac{2\beta\gamma\mu L}{\mu+L}, 1-\alpha)$.
If we further set
\begin{equation}\label{alpha-beta2}
   \textstyle \alpha=\frac{1}{2(C_{q}+1)},\quad \beta=\frac{1}{C_{q}^m+1},\quad c=\frac{4C_{q}(C_{q}+1)}{n},
\end{equation}
and choose the largest step-size 
$\gamma=\frac{2}{(\mu+L)(1+2C_{q}/n)},$
the convergent factor is
\begin{equation}\label{factor}
   \textstyle (1-\rho)^{-1}=\max\Big(2(C_{q}+1), (C_{q}^m+1)\frac{(\mu+L)^{2}}{2\mu L}\Big(\frac{1}{2}+\frac{C_{q}}{n}\Big)\Big).
\end{equation}
\end{corollary}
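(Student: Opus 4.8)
The plan is to obtain Corollary~\ref{cor1} as a direct specialization of Theorem~\ref{thm1}: no new argument is needed, only the substitution $\eta=0$ together with the prescribed values of $\alpha,\beta,c,\gamma$, followed by simplification of the resulting $\rho$. First I would set $\eta=0$ in the three-term maximum defining $\rho$ in Theorem~\ref{thm1}. Its first argument $(\eta^2+\eta)C_q^m/(1-(C_q^m+1)\beta)$ has numerator $\eta^2+\eta=0$ and therefore vanishes; the second argument $1+\eta\beta-2(1+\eta)\beta\gamma\mu L/(\mu+L)$ collapses to $1-2\beta\gamma\mu L/(\mu+L)$; the third is unchanged. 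This gives $\rho=\max\bigl(1-2\beta\gamma\mu L/(\mu+L),\,1-\alpha\bigr)$, hence $1-\rho=\min\bigl(2\beta\gamma\mu L/(\mu+L),\,\alpha\bigr)$, which is the first assertion of the corollary. I would also record that with $\eta=0$ the upper constraints on $\eta$ and the lower bound $\eta(\mu+L)/(2(1+\eta)\mu L)\le\gamma$ become vacuous, so only the upper bound on $\gamma$ and the admissibility of $\alpha,\beta$ in~\eqref{alpha-beta} remain to be checked; likewise, with $\beta=1/(C_q^m+1)$ the term $\beta(1-(C_q^m+1)\beta)\EE\|\vq^{k-1}\|^2$ in $\vV^k$ drops out, which is why the statement comes out clean.

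Next I would verify that the prescribed parameters are admissible for Theorem~\ref{thm1}. Taking $c=4C_q(C_q+1)/n$ satisfies $c\ge 4C_q(C_q+1)/n$ with equality, so $4C_q(C_q+1)/(nc)=1$, the square root in~\eqref{alpha-beta} is zero, and the feasible interval for $\alpha$ degenerates to the single point $\alpha=1/(2(C_q+1))$, which is exactly the chosen value; and $\beta=1/(C_q^m+1)$ is the right endpoint of its interval $0<\beta\le 1/(C_q^m+1)$. Moreover $c\alpha=\tfrac{4C_q(C_q+1)}{n}\cdot\tfrac{1}{2(C_q+1)}=\tfrac{2C_q}{n}$, so the step-size window of Theorem~\ref{thm1} becomes $0<\gamma\le \tfrac{2}{(1+2C_q/n)(\mu+L)}$; the stated $\gamma=\tfrac{2}{(\mu+L)(1+2C_q/n)}$ is precisely this upper endpoint, which justifies calling it the largest admissible step size.

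Finally I would substitute into $(1-\rho)^{-1}=\bigl(\min(2\beta\gamma\mu L/(\mu+L),\,\alpha)\bigr)^{-1}=\max\bigl((2\beta\gamma\mu L/(\mu+L))^{-1},\,\alpha^{-1}\bigr)$. Here $\alpha^{-1}=2(C_q+1)$, and
$$\frac{2\beta\gamma\mu L}{\mu+L}=\frac{2}{C_q^m+1}\cdot\frac{2}{(1+2C_q/n)(\mu+L)}\cdot\frac{\mu L}{\mu+L}=\frac{4\mu L}{(C_q^m+1)(1+2C_q/n)(\mu+L)^2},$$
so its reciprocal is $(C_q^m+1)\tfrac{(\mu+L)^2}{4\mu L}(1+2C_q/n)=(C_q^m+1)\tfrac{(\mu+L)^2}{2\mu L}\bigl(\tfrac12+\tfrac{C_q}{n}\bigr)$, giving~\eqref{factor}. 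I expect no substantial obstacle: the corollary is arithmetic downstream of Theorem~\ref{thm1}, and the only points needing care are (i) confirming that the degenerate choice $c=4C_q(C_q+1)/n$ still lies in the admissible range so Theorem~\ref{thm1} applies, and (ii) the small rearrangement turning $\tfrac14(1+2C_q/n)$ into $\tfrac12(\tfrac12+\tfrac{C_q}{n})$.
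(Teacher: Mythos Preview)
Your proposal is correct and is exactly the intended derivation: the paper does not give a separate proof of Corollary~\ref{cor1}, treating it as an immediate specialization of Theorem~\ref{thm1}, and your substitution $\eta=0$ followed by the admissibility checks and the arithmetic simplification of $(1-\rho)^{-1}$ reproduces~\eqref{factor} precisely. Your side remarks---that the $\vq^{k-1}$ term in $\vV^k$ disappears when $\beta=1/(C_q^m+1)$ and that the lower bound on $\gamma$ becomes vacuous at $\eta=0$---are the only subtleties, and you handled them correctly.
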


\begin{remark}
In particular, suppose $\{\Delta_i\}_{i=1}^{n}$ are compressed using the Bernoulli $p$-norm quantization with the largest block size $d_{\max}$, then
%
$C_{q}={1\over \alpha^w} -1,$
with $\alpha^w =\min_{\vzero\neq \vx \in\RR^{d_{\max}}}{\|\vx\|_2^2\over \|\vx\|_1\|\vx\|_p}\leq 1$.
Similarly, $\vq$ is compressed using the Bernoulli $p$-norm quantization with $C_q^m={1\over \alpha^m}-1$.
Then the linear convergent factor is
\begin{equation}\label{confactor_DRC}
    \textstyle (1-\rho)^{-1} = \max\Big\{\frac{2}{\alpha^w}, \frac{1}{\alpha^m}\frac{(\mu+L)^{2}}{\mu L}\Big(\frac{1}{2}-\frac{2}{n}+\frac{2}{n\alpha^w}\Big)\Big\}.
\end{equation}
While the result of DIANA in~\citep{mishchenko2019distributed} is 
$\max\Big\{\frac{2}{\alpha^w}, \frac{\mu+L}{\mu }\Big(\frac{1}{2}-\frac{1}{n}+\frac{1}{n\alpha^w}\Big)\Big\},$
which is larger than~\eqref{confactor_DRC} with $\alpha^m=1$ (no compression for the model). 
When there is no compression for $\Delta_i$, i.e., $\alpha^w=1$, the algorithm reduces to the gradient descent, and the linear convergent factor is the same as that of the gradient descent for strongly convex functions.
\end{remark}

\begin{remark}
Although error compensation often improves the convergence in practice, in theory, no compensation, i.e., $\eta=0$, provides the best convergence rate. This is because we don't have much information of the error being compensated. Filling this gap will be an interesting future direction.
\end{remark}

\subsection{The nonconvex case with $R=0$}\label{sec42}

\begin{theorem}\label{thm3}
Under Assumptions~\ref{ass:compression}-\ref{asm2} and the additional assumption that each worker samples the gradient from the full dataset, we set $\alpha$ and $\beta$ according to~\eqref{alpha-beta}.
By choosing
$$\textstyle{\gamma \leq \min\Big\{\frac{-1+\sqrt{1+ \frac{48L^2\beta^2(C_q^m+1)^2}{C_q^m}}}{12L\beta(C_q^m+1)}, \frac{1}{6L\beta(1+c\alpha)(C_q^m+1)}\Big\},}$$
we have
\begin{align}\label{nonconvex-conv}
  &\frac{\frac{\beta}{2}-3(1+c\alpha)(C_{q}^{m}+1)L\beta^2\gamma}{K}\sum_{k=1}^{K} \EE\|\nabla f(\hat{\vx}^{k})\|^{2} \nonumber\\ 
  \leq& \frac{\Lambda^1-\Lambda^{K+1}}{\gamma K}
  + \frac{3(C^m_{q}+1)(1+nc\alpha)L\beta^2\sigma^2\gamma}{n},
\end{align}
where 
\begin{align}
\Lambda^{k}=&(C_q^m+1)L\beta^2\|\vq^{k-1}\|^2+f(\hat{\vx}^{k})-f^{*}+3c(C_{q}^{m}+1)L\beta^{2}\gamma^2\frac{1}{n}\sum_{i=1}^{n}\EE\|\vh^{k}_{i}\|^{2}.
\end{align}
\end{theorem}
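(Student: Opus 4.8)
The plan is to compress the three error sources of DORE --- stochastic gradient noise, worker-side gradient-residual compression (monitored by the states $\vh_i^k$), and master-side model-residual compression --- into a single one-step inequality for the potential $\Lambda^k$ of the statement, namely
\begin{equation*}
\EE[\Lambda^{k+1}]\le\EE[\Lambda^{k}]-\Big(\tfrac{\beta}{2}-3(1+c\alpha)(C_q^m+1)L\beta^2\gamma\Big)\EE\|\nabla f(\hat\vx^k)\|^2+\tfrac{3(C_q^m+1)(1+nc\alpha)L\beta^2\sigma^2\gamma}{n},
\end{equation*}
and then to telescope over $k=1,\dots,K$; since each of the three summands of $\Lambda$ is nonnegative we have $\Lambda^{K+1}\ge0$, so dividing by $\gamma K$ gives~\eqref{nonconvex-conv}. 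The backbone is the $L$-smoothness descent. Since $R=0$, the master update is $\vx^{k+1}=\hat\vx^k-\gamma\hat\vg^k$ and $\hat\vx^{k+1}-\hat\vx^k=\beta\hat\vq^k$, so Assumption~\ref{asm2} (inherited by $f$) gives $f(\hat\vx^{k+1})\le f(\hat\vx^k)+\langle\nabla f(\hat\vx^k),\beta\hat\vq^k\rangle+\tfrac{L}{2}\beta^2\|\hat\vq^k\|^2$. Taking conditional expectation I use two unbiasedness facts: the master compression is unbiased, $\EE[\hat\vq^k\mid\cdot]=\vq^k$; and the recovered averaged gradient is unbiased for the true gradient, $\EE[\hat\vg^k\mid\hat\vx^k]=\nabla f(\hat\vx^k)$, which holds because the invariant $\vh^k=\tfrac1n\sum_i\vh_i^k$ is preserved by the updates (both start at $\vzero$ and receive the common increment $\alpha\hat\Delta^k$) and because, under the extra assumption that every worker samples from the full dataset (so $f_i=f$), $\EE[\vg_i^k\mid\hat\vx^k]=\nabla f(\hat\vx^k)$ with variance at most $\sigma^2$. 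The cross term thus contributes $-\beta\gamma\|\nabla f(\hat\vx^k)\|^2$, and $\EE\|\hat\vq^k\|^2\le(C_q^m+1)\EE\|\vq^k\|^2\le(C_q^m+1)\gamma^2\EE\|\hat\vg^k\|^2$ by Assumption~\ref{ass:compression} on the master side (when $\eta>0$, the error-compensation part is carried along via $\EE\|\ve^k\|^2\le C_q^m\EE\|\vq^{k-1}\|^2$, which is precisely what the $\|\vq^{k-1}\|^2$ slot of $\Lambda^k$ accommodates).

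Next I would reduce $\EE\|\hat\vg^k\|^2$ to elementary quantities. Splitting $\hat\vg^k-\nabla f(\hat\vx^k)$ into the worker-compression part $\tfrac1n\sum_i(\hat\Delta_i^k-\Delta_i^k)$ and the stochastic part $\tfrac1n\sum_i(\vg_i^k-\nabla f(\hat\vx^k))$, Assumptions~\ref{ass:compression} and~\ref{asm1} bound their second moments by $\tfrac{C_q}{n^2}\sum_i\EE\|\Delta_i^k\|^2$ and $\tfrac{\sigma^2}{n}$ respectively, where $\Delta_i^k=\vg_i^k-\vh_i^k$, and $\EE\|\Delta_i^k\|^2$ is itself $\lesssim\sigma^2+\|\nabla f(\hat\vx^k)\|^2+\|\vh_i^k\|^2$. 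The last piece requires a one-step recursion for $\tfrac1n\sum_i\EE\|\vh_i^k\|^2$: from Lemma~\ref{lem3}, $\EE_Q[\vh_i^{k+1}]=(1-\alpha)\vh_i^k+\alpha\vg_i^k$, and combining this with the variance identity $\EE_Q\|\vh_i^{k+1}-\EE_Q\vh_i^{k+1}\|^2=\alpha^2\EE\|\hat\Delta_i^k-\Delta_i^k\|^2\le\alpha^2C_q\EE\|\Delta_i^k\|^2$ and convexity of $\|\cdot\|^2$ gives an inequality of the shape $\EE\|\vh_i^{k+1}\|^2\le\big(1-\alpha+O(\alpha^2C_q)\big)\EE\|\vh_i^k\|^2+\big(\alpha+O(\alpha^2C_q)\big)\big(\EE\|\nabla f(\hat\vx^k)\|^2+\sigma^2\big)$; the admissible window for $\alpha$ in~\eqref{alpha-beta}, equivalently $4C_q(C_q+1)\le nc$, is exactly what keeps the $\|\vh_i^k\|^2$ coefficient strictly below $1$ with enough slack for the next step.

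Then I would assemble $\Lambda^k$ by adding together the smoothness descent, $(C_q^m+1)L\beta^2$ times the resulting bound on $\EE\|\vq^k\|^2$, and $3c(C_q^m+1)L\beta^2\gamma^2$ times the $\vh$-recursion, with these weights chosen so that the positive $\|\vq^k\|^2$, $\|\vq^{k-1}\|^2$, and $\tfrac1n\sum_i\|\vh_i^k\|^2$ contributions cancel against the matching terms of $\Lambda^{k+1}-\Lambda^k$, leaving only the net gradient term and the $\sigma^2$ noise displayed above. The two upper bounds on $\gamma$ in the statement are exactly the thresholds that make, respectively, the residual $\|\vq^k\|^2$-coefficient nonpositive (the quadratic-in-$\gamma$ bound, arising from the interaction of the $\tfrac L2\beta^2(C_q^m+1)\|\vq^k\|^2$ term with the $(C_q^m+1)L\beta^2\|\vq^{k-1}\|^2$ slot of $\Lambda$) and the coefficient $\tfrac\beta2-3(1+c\alpha)(C_q^m+1)L\beta^2\gamma$ of $\|\nabla f(\hat\vx^k)\|^2$ positive, whence the theorem.

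The step I expect to be the main obstacle is the bookkeeping of these mutually coupled estimates: the $f$-descent, the $\|\vq^k\|^2$ bound, and the $\|\vh_i^k\|^2$ recursion each inject a compression-variance term into another ($\vh$-states enter $\EE\|\hat\vg^k\|^2$ through $\|\Delta_i^k\|^2$; this enters $\|\vq^k\|^2$; and $\|\vq^k\|^2$ feeds both the $f$-descent and, via $\hat\vx^{k+1}-\hat\vx^k=\beta\hat\vq^k$, the model update), so one must tune the potential weights together with the joint $(\alpha,\beta,\gamma)$ range so that every cross term is dominated by a term already present in $\Lambda$. A secondary subtlety, absent from the strongly convex analysis, is that there is no contraction factor here: the $\|\vq^{k-1}\|^2$ and $\tfrac1n\sum_i\|\vh_i^k\|^2$ terms have to be eliminated purely by telescoping inside $\Lambda^k$ rather than by a geometric decay, which is what fixes the precise coefficients $(C_q^m+1)L\beta^2$ and $3c(C_q^m+1)L\beta^2\gamma^2$ appearing there.
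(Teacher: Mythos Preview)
Your overall architecture is the paper's: smoothness descent for $f(\hat\vx^{k+1})$, Lemma~\ref{lem4} with $\vh^*=\vzero$ to control $\EE\|\hat\vg^k\|^2$, Lemma~\ref{lem3} with $\vs_i=\vzero$ for the $\vh$-state recursion, assemble into $\Lambda^k$ and telescope. Two slips would keep your constants from matching the statement.

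The first is that the proof fixes $\eta=\gamma$, and this is not optional. The cross term is $\beta\,\EE\dotp{\nabla f(\hat\vx^k),\vq^k}=\beta\,\EE\dotp{\nabla f(\hat\vx^k),-\gamma\hat\vg^k+\eta\ve^k}$, and $\ve^k$ is \emph{not} mean-zero conditional on $\hat\vx^k$; Young's inequality leaves behind $\tfrac{\beta\eta}{2}\|\nabla f(\hat\vx^k)\|^2+\tfrac{\beta\eta}{2}\|\ve^k\|^2$, so the gradient coefficient is $\beta\gamma-\tfrac{\beta\eta}{2}-3(1+c\alpha)(C_q^m+1)L\beta^2\gamma^2$, which becomes $\tfrac{\beta\gamma}{2}-3(1+c\alpha)(C_q^m+1)L\beta^2\gamma^2$ only after setting $\eta=\gamma$. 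Likewise the first (quadratic) bound on $\gamma$ in the statement is exactly the root of $\tfrac{\beta\gamma C_q^m}{2}+3(C_q^m+1)C_q^mL\beta^2\gamma^2\le (C_q^m+1)L\beta^2$, the condition that the total $\|\vq^{k-1}\|^2$ contribution (from both the cross term and the $\|\vq^k\|^2$ expansion) fits into the $\Lambda^k$-slot; that inequality is in $\gamma$ only because $\eta=\gamma$. Your displayed one-step inequality should therefore read $\tfrac{\beta\gamma}{2}-3(1+c\alpha)(C_q^m+1)L\beta^2\gamma^2$ and noise $3(C_q^m+1)(1+nc\alpha)L\beta^2\gamma^2\sigma^2/n$; the theorem's coefficients appear only after the division by $\gamma K$.

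The second is the role of the $\alpha$-window~\eqref{alpha-beta}. It does not serve to push the $\|\vh_i^k\|^2$ coefficient below $1$ (that coefficient is already $1-\alpha$ in Lemma~\ref{lem3}); rather, it makes the \emph{combined} $\sum_i\|\Delta_i^k\|^2$ coefficient $nc(C_q+1)\alpha^2-nc\alpha+C_q$ nonpositive, so that this term can be dropped outright. You therefore never need your auxiliary bound $\EE\|\Delta_i^k\|^2\lesssim\sigma^2+\|\nabla f(\hat\vx^k)\|^2+\|\vh_i^k\|^2$; the $\Delta$-contributions from Lemmas~\ref{lem3} and~\ref{lem4} cancel directly under the stated range of $\alpha$.
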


\begin{corollary}\label{col2}
Let $\alpha={1\over 2(C_q+1)}, \beta=\frac{1}{C_q^m+1}$, and $c=\frac{4C_q(C_q+1)}{n}$, then $1+nc\alpha$ is a fixed constant. If $\gamma = \frac{1}{12L(1+c\alpha)(1+\sqrt{K/n})}$, when K is relatively large, we have 
\begin{align}\label{col_nonconv_eq}
\frac{1}{K}\sum_{k=1}^K \EE\|\nabla f(\hat{\vx}^{k})\|^{2}\lesssim {1\over K} + {1\over \sqrt{Kn}}.
\end{align}
\end{corollary}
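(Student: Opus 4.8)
The plan is to specialize the bound \eqref{nonconvex-conv} of Theorem~\ref{thm3} to the stated parameters, check admissibility of the step size, and then balance the two terms on the right-hand side. First I would record the consequences of $\alpha=\frac{1}{2(C_q+1)}$, $\beta=\frac{1}{C_q^m+1}$, $c=\frac{4C_q(C_q+1)}{n}$: namely $(C_q^m+1)\beta=1$; $c\alpha=\frac{2C_q}{n}$, so that $1+c\alpha\in[1,1+2C_q]$ is bounded independently of both $n$ and $K$; and $nc\alpha=2C_q$, so $1+nc\alpha=1+2C_q$ is the claimed fixed constant. With $(C_q^m+1)\beta=1$ the two branches of the step-size cap in Theorem~\ref{thm3} become $\frac{-1+\sqrt{1+48L^2/C_q^m}}{12L}$ and $\frac{1}{6L(1+c\alpha)}$; the chosen $\gamma=\frac{1}{12L(1+c\alpha)(1+\sqrt{K/n})}$ always lies below the second branch (since $1+\sqrt{K/n}\ge 1$), and since $\gamma\to 0$ as $K\to\infty$ while the first branch is a fixed positive constant, $\gamma$ falls below it once $K$ is large enough — this is exactly the ``$K$ relatively large'' hypothesis.

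Next I would clean up both sides of \eqref{nonconvex-conv}. On the left, $\frac{\beta}{2}-3(1+c\alpha)(C_q^m+1)L\beta^2\gamma=\beta\big(\tfrac12-3(1+c\alpha)L\gamma\big)$, and $3(1+c\alpha)L\gamma=\frac{1}{4(1+\sqrt{K/n})}\le\frac14$, so this coefficient lies in $[\frac{\beta}{4},\frac{\beta}{2})$; in particular it is positive, so dividing \eqref{nonconvex-conv} by $K$ times this coefficient is legitimate and inflates the right-hand side by at most a factor $4/\beta$. On the right, $\Lambda^{K+1}\ge 0$ because each of its three summands is nonnegative (taking $f^{*}=\inf f$), so $\Lambda^1-\Lambda^{K+1}\le\Lambda^1$, and $\Lambda^1$ is a finite constant fixed by the initialization and problem data, uniformly in $K$ (its $\gamma^2$-weighted term even vanishes as $K\to\infty$). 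Collecting these, $\frac1K\sum_{k=1}^K\EE\|\nabla f(\hat{\vx}^{k})\|^2\le \frac{4\Lambda^1}{\beta\gamma K}+\frac{12(C_q^m+1)(1+2C_q)L\beta\sigma^2\gamma}{n}$.

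Finally I would substitute $\gamma$ and balance. The optimization term becomes $\frac{4\Lambda^1}{\beta\gamma K}=\frac{48L(1+c\alpha)\Lambda^1}{\beta}\cdot\frac{1+\sqrt{K/n}}{K}=\frac{48L(1+c\alpha)\Lambda^1}{\beta}\big(\frac1K+\frac{1}{\sqrt{Kn}}\big)$, using the identity $\frac{1+\sqrt{K/n}}{K}=\frac1K+\frac{1}{\sqrt{Kn}}$. The variance term becomes $\frac{(C_q^m+1)(1+2C_q)\beta\sigma^2}{n(1+c\alpha)(1+\sqrt{K/n})}\le\frac{(C_q^m+1)(1+2C_q)\beta\sigma^2}{1+c\alpha}\cdot\frac{1}{\sqrt{nK}}$, using $1+\sqrt{K/n}\ge\sqrt{K/n}$ so that $\frac{1}{n(1+\sqrt{K/n})}\le\frac{1}{\sqrt{nK}}$. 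Adding the two yields \eqref{col_nonconv_eq} with constants depending only on $L,\sigma^2,C_q,C_q^m,\Lambda^1$ (and $1+c\alpha\le 1+2C_q$), but not on $K$ or $n$. The computation is otherwise routine; the only genuinely delicate points are that the bound is inherently asymptotic in $K$ (because the first branch of the step-size cap is $K$-independent, so one must argue $\gamma$ eventually drops below it) and that one must confirm the left-hand coefficient stays bounded away from zero before dividing — both handled above.
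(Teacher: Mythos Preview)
Your proposal is correct and follows essentially the same route as the paper: specialize the parameters so that $(C_q^m+1)\beta=1$ and $1+nc\alpha=1+2C_q$, verify that the chosen $\gamma$ eventually satisfies the step-size cap of Theorem~\ref{thm3}, lower bound the left-hand coefficient by $\beta/4$, then substitute $\gamma$ and use $\tfrac{1+\sqrt{K/n}}{K}=\tfrac1K+\tfrac{1}{\sqrt{Kn}}$ and $\tfrac{1}{n(1+\sqrt{K/n})}\le\tfrac{1}{\sqrt{nK}}$. If anything, you are slightly more explicit than the paper in justifying $\Lambda^{K+1}\ge 0$ and the positivity of the left-hand coefficient before dividing.
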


\begin{remark}
The dominant term in~\eqref{col_nonconv_eq} is $O(1/\sqrt{Kn})$, which implies that the sample complexity of each worker node is $O(1/(n\epsilon^{2}))$ in average to achieve an $\epsilon$-accurate solution. 
It shows that, same as DoubleSqueeze in~\cite{tang2019doublesqueeze}, DORE is able to perform linear speedup. 
Furthermore, this convergence result is the same as the P-SGD without compression. 
Note that DoubleSqueeze has an extra term $(1/K)^{\frac{2}{3}}$, and its convergence requires the bounded variance of the compression operator.
\end{remark}

\begin{table*}[!ht]
\begin{center}
\begin{tabular}{|c|c|c|c|c|}\hline
    Algorithm  & Compression & Compression Assumed & Linear rate  & Nonconvex Rate   \\\hline 
    QSGD & Grad  & $2$-norm Quantization & N/A & $\frac{1}{K}+B $ \\\hline
    DIANA & Grad & $p$-norm Quantization & \checkmark & ${1\over\sqrt{Kn}}+{1\over K}$\\\hline
    DoubleSqueeze & Grad+Model & Bounded Variance & N/A  & ${1\over\sqrt{Kn}}+{1\over K^{2/3}}+{1\over K}$ \\\hline
    DORE & Grad+Model & Assumption~\ref{ass:compression} & \checkmark & ${1\over\sqrt{Kn}}+{1\over K}$ \\\hline
\end{tabular}
\end{center}
\caption{A comparison between related algorithms. 
DORE is able to converges linearly to the $\mathcal{O} (\sigma)$ neighborhood of optimal point like full-precision SGD and DIANA in the strongly convex case while achieving much better communication efficiency. DORE also admits linear speedup in the nonconvex case like DoubleSqueeze but DORE doesn't require the assumptions of bounded compression error or bounded gradient.
}
\label{table}
\end{table*}

\section{Experiment}
\label{sec:experiment}
In this section, we validate the theoretical results and demonstrate the superior performance of DORE. 
Our experimental results demonstrate that (1) DORE achieves similar convergence speed as full-precision SGD and state-of-art quantized SGD baselines and (2) its iteration time is much smaller than most existing algorithms, supporting the superior communication efficiency of DORE. 

To make a fair comparison, we choose the same Bernoulli $\infty$-norm quantization as described in Section~\ref{sec:alg} and the quantization block size is 256 for all experiments if not being explicitly stated because $\infty$-norm quantization is unbiased and commonly used. The parameters $\alpha, \beta, \eta$ for DORE are chosen to be $0.1, 1 ~\text{and}~ 1$, respectively. 

The baselines we choose to compare include SGD, QSGD~\citep{alistarh2017qsgd}, MEM-SGD~\citep{Stich:2018:SSM:3327345.3327357}, DIANA~\citep{mishchenko2019distributed}, DoubleSqueeze and DoubleSqueeze (topk) ~\citep{tang2019doublesqueeze}. 
SGD is the vanilla SGD without any compression and QSGD quantizes the gradient directly. MEM-SGD is the  QSGD with error compensation. DIANA, which only compresses and transmits the gradient difference, is a special case of the proposed DORE. DoubleSqueeze quantizes both the gradient on the workers and the averaged gradient on the server with error compensation. Although DoubleSqueeze is claimed to work well with both biased and unbiased compression, in our experiment it converges much slower and suffers the loss of accuracy with unbiased compression. Thus, we also compare with DoubleSqueeze using the Top-k compression as presented in~\cite{tang2019doublesqueeze}.

\subsection{Strongly convex}
To verify the convergence for strongly convex and smooth objective functions, we conduct the experiment on a linear regression problem: $f(\vx)=\|\vA\vx-\vb\|^2+\lambda \|\vx\|^2$. The data matrix $\vA \in \mathbb{R}^{1200\times 500}$ and optimal solution $\vx_* \in \mathbb{R}^{500}$ are randomly synthesized. Then we generate the prediction $\vb$ by sampling from a Gaussian distribution whose mean is $\vA\vx_*$. The rows of the data matrix $\vA$ are allocated evenly to 20 worker nodes. To better verify the linear convergence to the $\mathcal{O}(\sigma)$ neighborhood around the optimal solution, we take the full gradient in each node for all algorithms to exclude the effect of the gradient variance ($\sigma=0$).

As showed in Figure~\ref{fig:lr}, 
with full gradient and a constant learning rate, DORE  converges linearly, same as SGD and DIANA, but QSGD, MEM-SGD, DoubleSqueeze, as well as DoubleSqueeze (topk) converge to a neighborhood of the optimal point. This is because these algorithms assume the bounded gradient and their convergence errors depend on that bound. 
Although they converge to the optimal solution using a diminishing step size, their converge rates will be much slower. 

In addition, we also validate that the norms of the gradient and model residual decrease exponentially, and it explains the linear convergence behavior of DORE. For more details, please refer to Appendix~\ref{sec:residual}.

\begin{figure}[!bt]
    \centering
    \vspace{-0.2in}
    \includegraphics[width=0.5\textwidth]{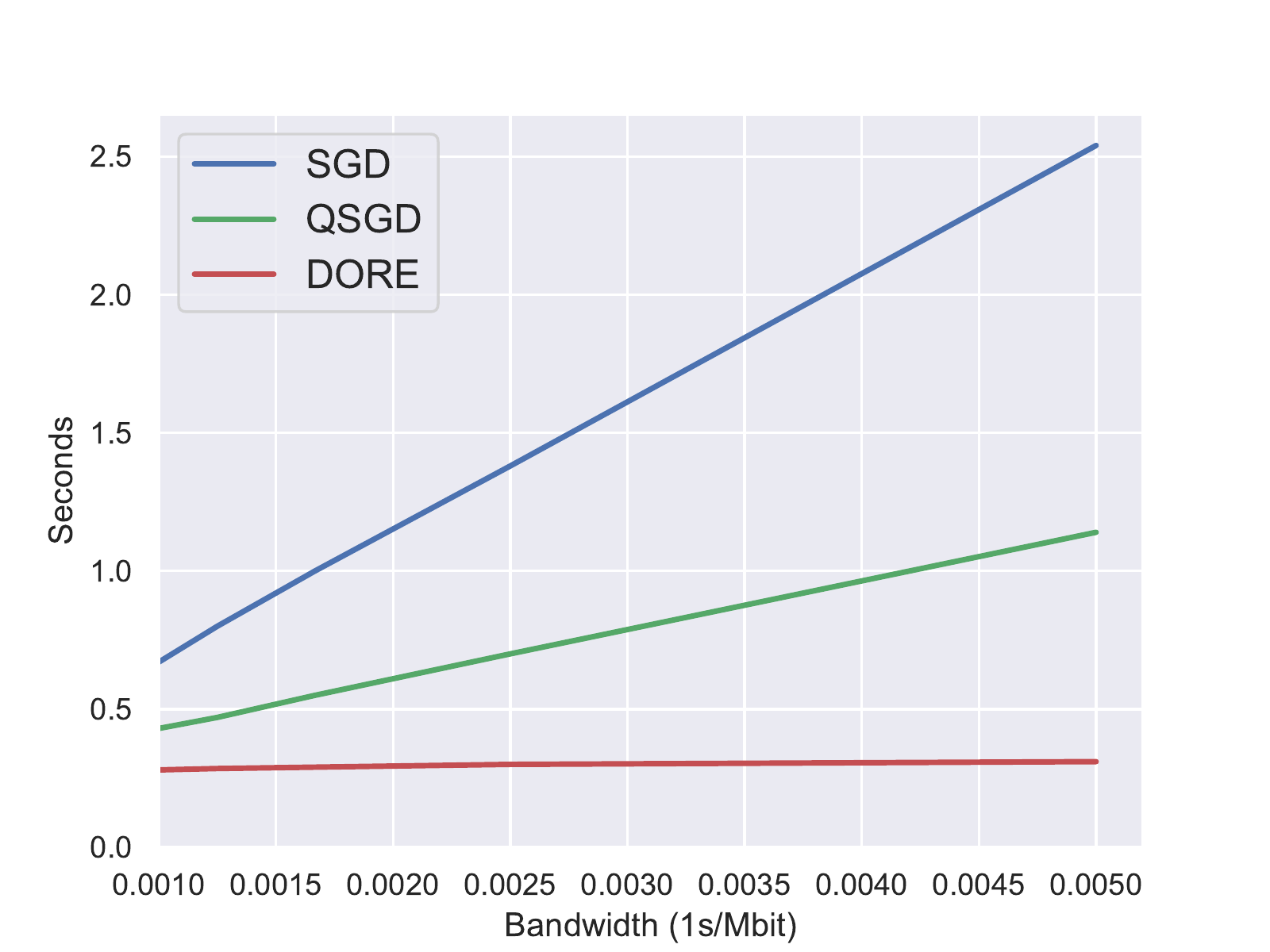}
    \caption{Per iteration time cost on Resnet18 for SGD, QSGD, and DORE. It is tested in a shared cluster environment connected by Gigabit Ethernet interface. DORE speeds up the training process significantly by mitigating the communication bottleneck.}
    \label{fig:time}
    \vspace{-0.1in}
\end{figure}

\subsection{Nonconvex}
To verify the convergence in the nonconvex case, we test the proposed DORE with two classical deep neural networks on two representative datasets, respectively, i.e., LeNet~\citep{lenet1998} on MNIST and Resnet18~\citep{He2016DeepRL} on CIFAR10. 
In the experiment, we use 1 parameter server and 10 workers, each of which is equipped with an NVIDIA Tesla K80 GPU.
The batch size for each worker node is 256.
We use 0.1 and 0.01 as the initial learning rates for LeNet and Resnet18, and decrease them by a factor of 0.1 after every 25 and 100 epochs, respectively. 
All parameter settings are the same for all algorithms.

\begin{figure*}[!ht]
\vspace{-0.3in}
\subfloat[Learning rate=0.05]{
\begin{minipage}[t]{0.45\textwidth}
    \centering
    \includegraphics[width=0.9\textwidth]{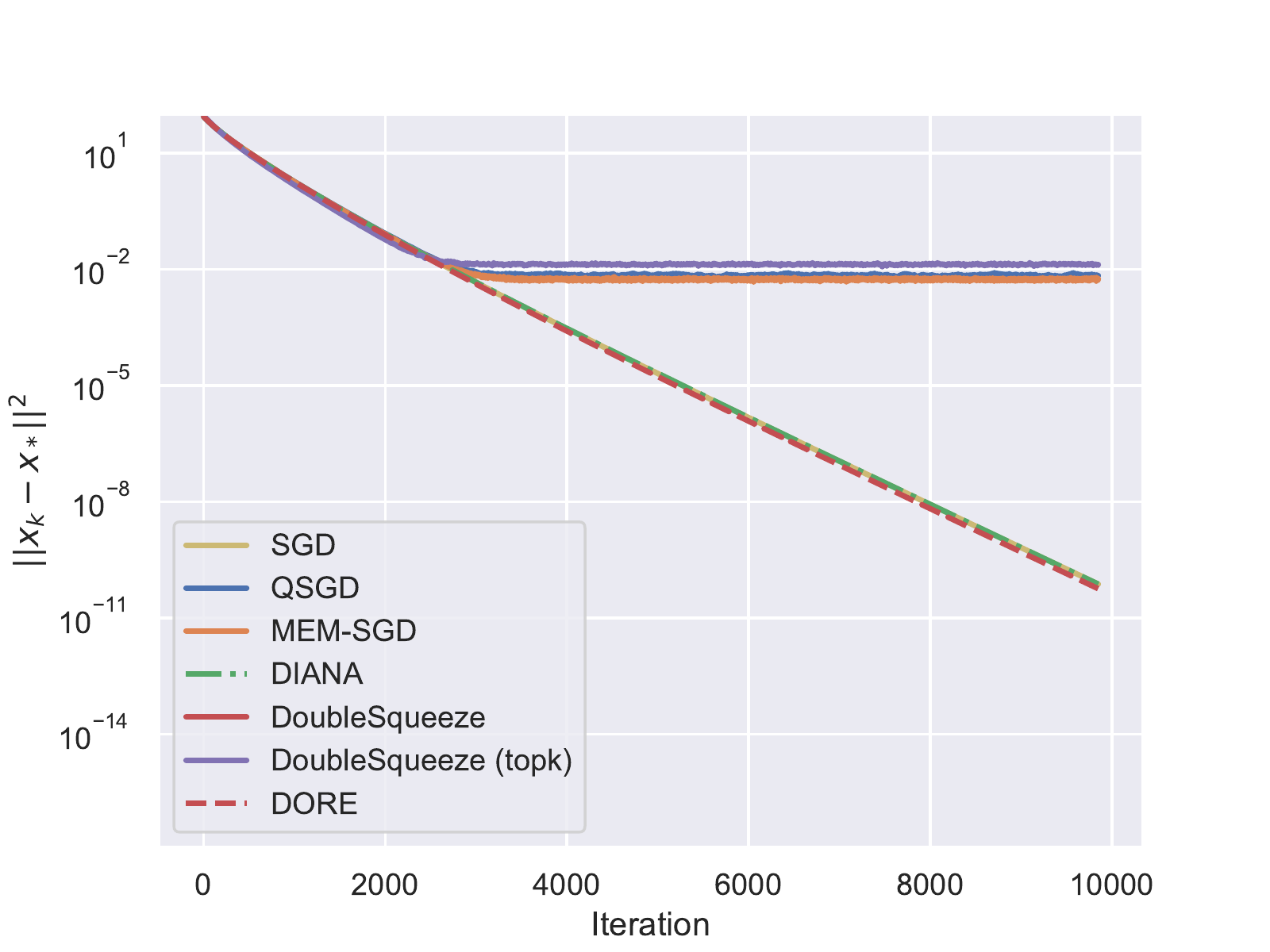}
\end{minipage}
}
\subfloat[Learning rate=0.025]{
\begin{minipage}[t]{0.45\textwidth}
    \centering
    \includegraphics[width=0.9\textwidth]{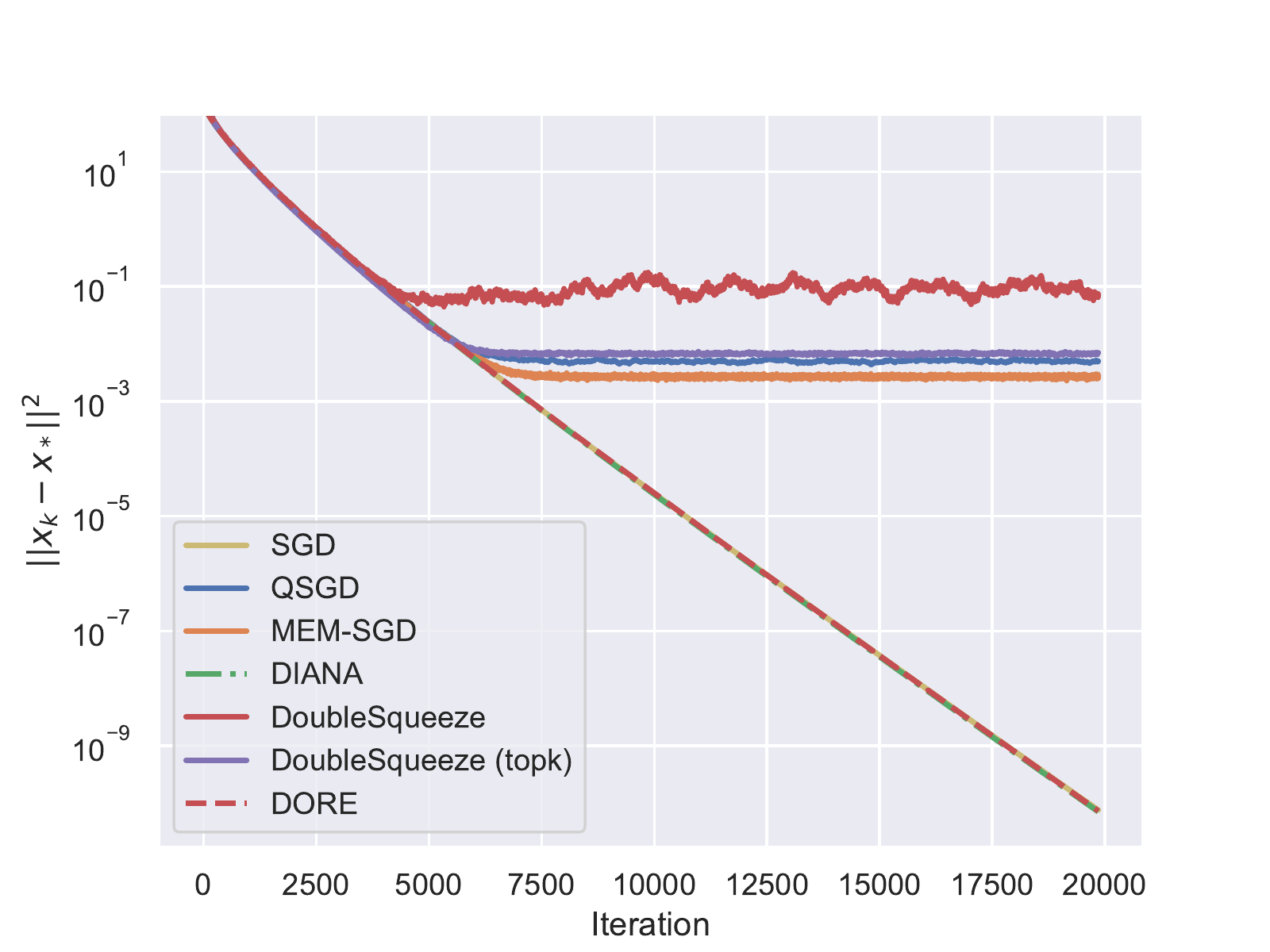}
\end{minipage}
}
\caption{Linear regression on synthetic data. When the learning rate is 0.05, DoubleSqueeze diverges. In both cases, DORE, SGD, and DIANA converge linearly to the optimal point, while QSGD, MEM-SGD, DoubleSqueeze, and DoubleSqueeze (topk) only converge to the neighborhood even when full gradient is available.}
\vspace{-0.3in}
\label{fig:lr}
\end{figure*}

\begin{figure*}[!ht]
\subfloat[Training loss]{
\begin{minipage}[t]{0.45\textwidth}
    \centering
    \includegraphics[width=0.9\textwidth]{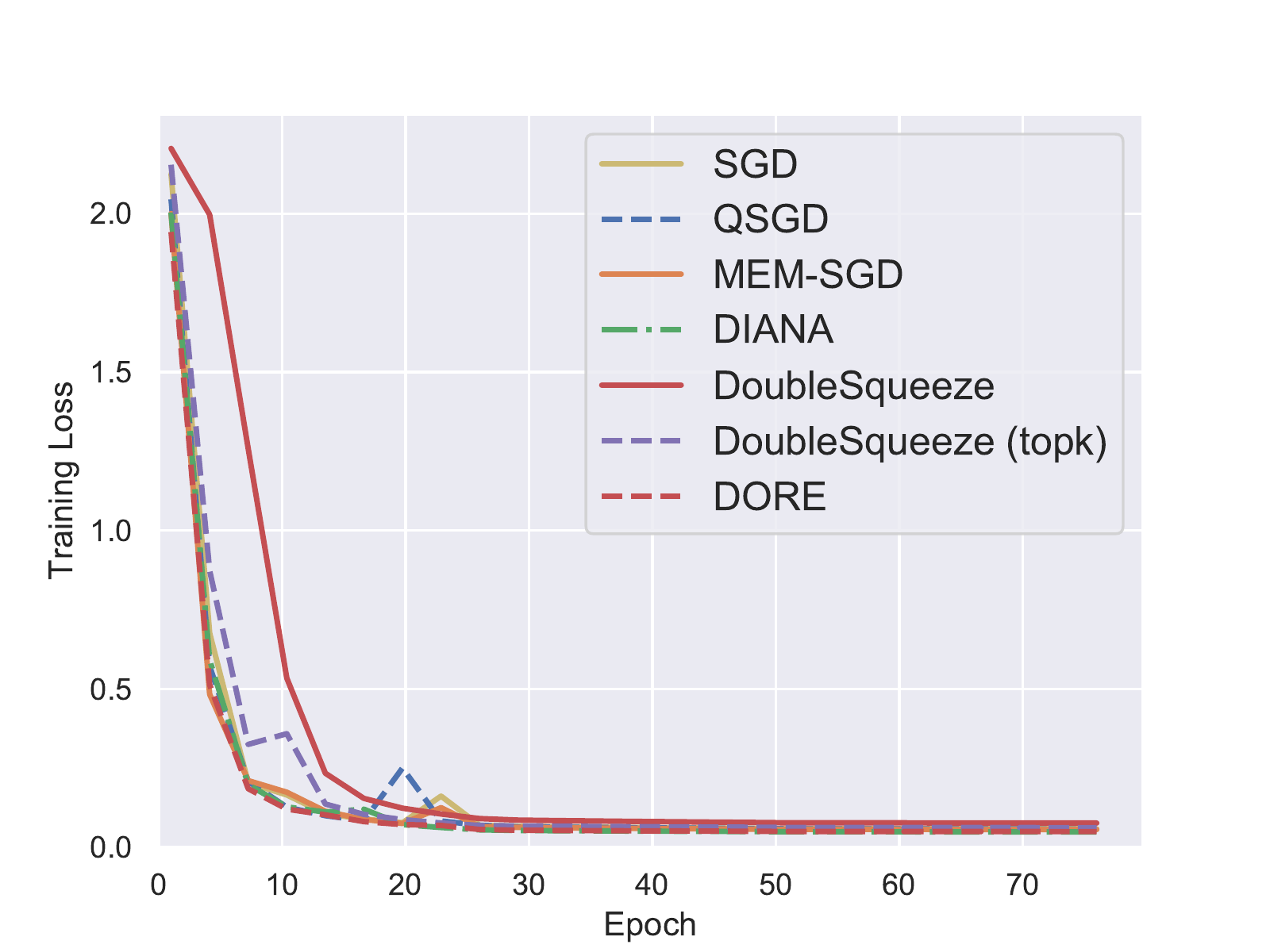}
\end{minipage}
}
\subfloat[Test Loss]{
\begin{minipage}[t]{0.45\textwidth}
    \centering
    \includegraphics[width=0.9\textwidth]{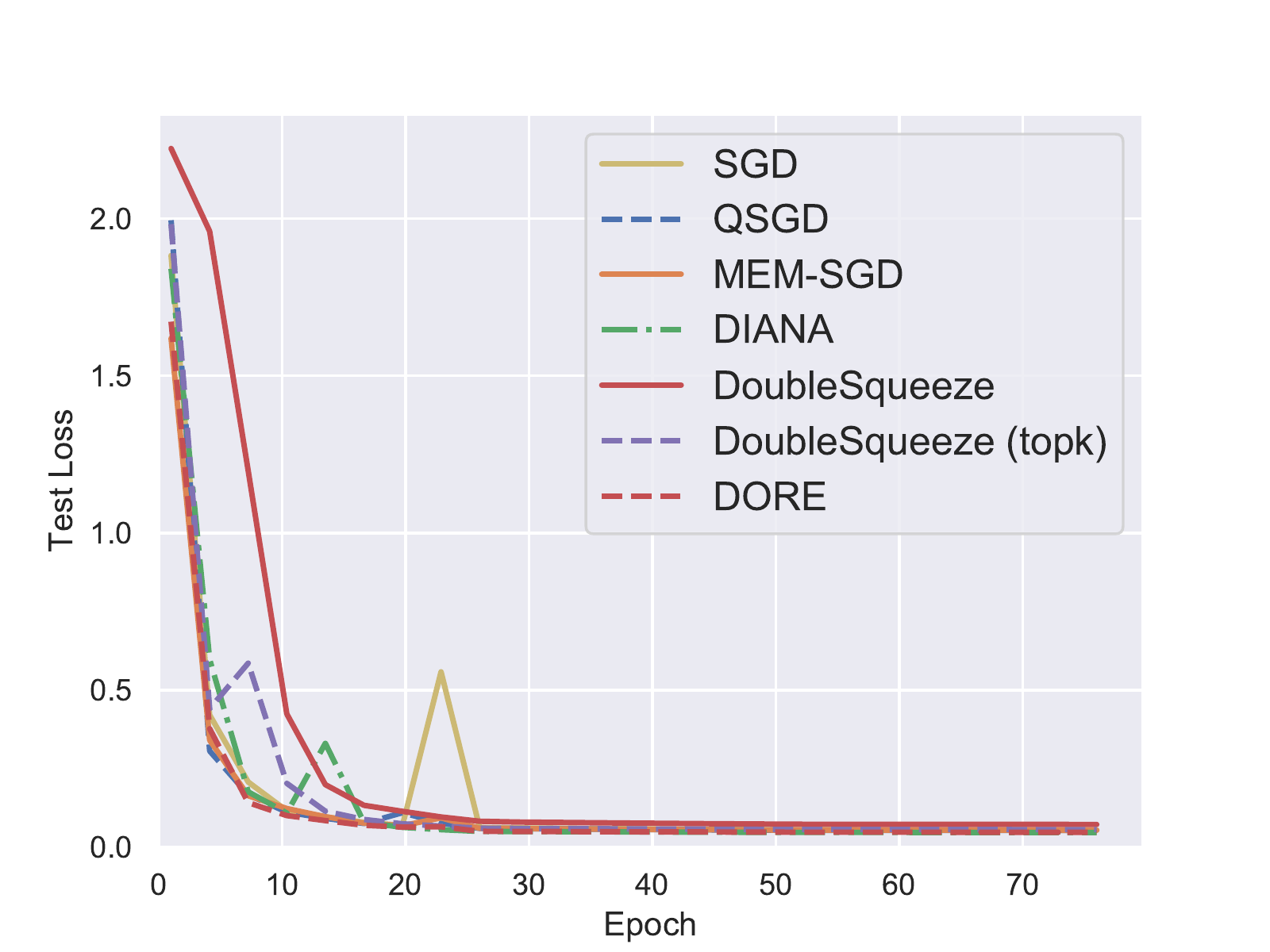}
\end{minipage}
}
\caption{LeNet trained on MNIST. DORE converges similarly as most baselines. It outperforms DoubleSqueeze using the same compression method while has similar performance as DoubleSqueeze (topk).}
\vspace{-0.3in}
\label{fig:mnist}
\end{figure*}

\begin{figure*}[!ht]
\subfloat[Training Loss]{
\begin{minipage}[t]{0.45\textwidth}
    \centering
    \includegraphics[width=0.9\textwidth]{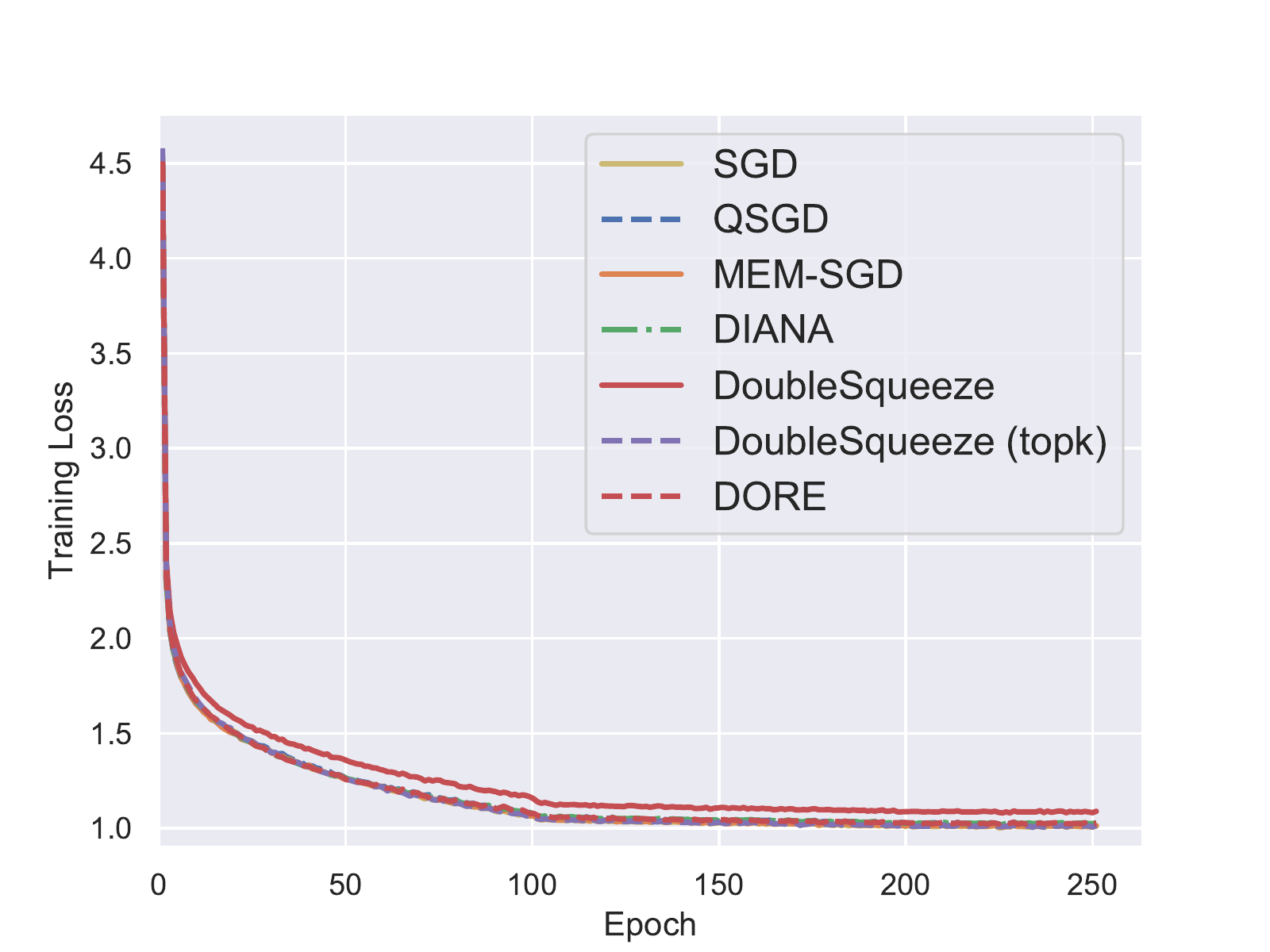}
\end{minipage}
}
\subfloat[Test Loss]{
\begin{minipage}[t]{0.45\textwidth}
    \centering
    \includegraphics[width=0.9\textwidth]{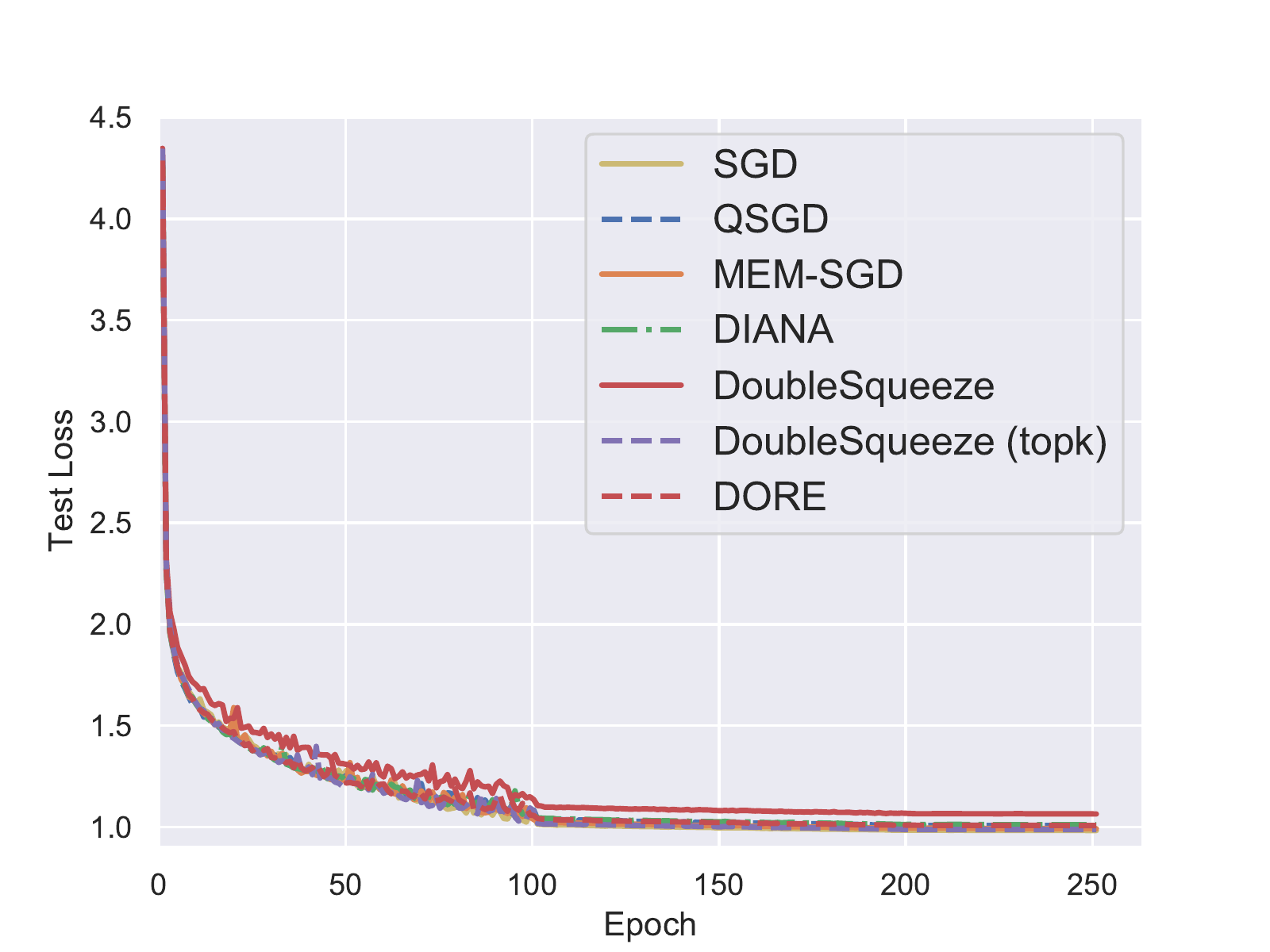}
\end{minipage}
}
\caption{Resnet18 trained on CIFAR10. DORE achieves similar convergence and accuracy as most baselines. DoubeSuqeeze converges slower and suffers from the higher loss but it works well with topk compression.
}
\vspace{-0.1in}
\label{fig:cifar}
\end{figure*}

Figures~\ref{fig:mnist} and~\ref{fig:cifar} show the training loss and test loss for each epoch during the training of LeNet on the MNIST dataset and Resnet18 on CIFAR10 dataset.  
The results indicate that in the nonconvex case, even with both compressed gradient and model information, DORE can still achieve similar convergence speed as full-precision SGD and other quantized SGD variants. DORE achieves much better convergence speed than DoubleSqueeze using the same compression method and converges similarly with DoubleSqueeze with Topk compression as presented in~\citep{tang2019doublesqueeze}. We also validate via parameter sensitivity in Appendix~\ref{sec:parameter} that DORE performs consistently well under different parameter settings such as compression block size, $\alpha, ~\beta$ and $\eta$.

\subsection{Communication efficiency}
In terms of communication cost, DORE enjoys the benefit of extremely efficient communication. As one example, under the same setting as the Resnet18 experiment described in the previous section, we test the time cost per iteration for SGD, QSGD, and DORE under varied network bandwidth. We didn't test MEM-SGD, DIANA, and DoubleSqueeze because  MEM-SGD, DIANA have similar time cost as QSGD while DoubleSqueeze has similar time cost as DORE. The result showed in Figure~\ref{fig:time} indicates that as the bandwidth becomes worse, with both gradient and model compression, the advantage of DORE becomes more remarkable compared to the baselines that don't apply compression for model synchronization .

\section{Conclusion}

Communication cost is the severe bottleneck for distributed training of modern large-scale machine learning models. 
Extensive works have compressed the gradient information to be transferred during the training process, but model compression is rather limited due to its intrinsic difficulty. In this paper, we proposed the Double Residual Compression SGD named DORE to compress both gradient and model communication that can  mitigate this bottleneck prominently. 
The theoretical analyses suggest good convergence rate of DORE under weak assumptions. 
Furthermore, DORE is able to reduce 95\% of the communication cost while maintaining similar convergence rate and model accuracy compared with the full-precision SGD.

\newpage
\bibliography{sections/reference}

\newpage
\appendix
\section{Supplementary materials}
\subsection{Compression error}
\label{sec:residual}
The property of the compression operator indicates that the compression error is linearly proportional to the norm of the variable being compressed:
\begin{equation*}
\EE \|Q(\vx)-\vx\|^{2}\leq C\|\vx\|^{2}.
\end{equation*}
We visualize the norm of the variables being compressed, i.e., the gradient residual (the worker side) and model residual (the master side) for DORE as well as error compensated gradient (the worker side) and averaged gradient (the master side) for DoubleSqueeze. As showed in Figure~\ref{fig:residual}, the gradient and model residual of DORE decrease exponentially and the compression errors vanish. However, for DoubleSqueeze, their norms only decrease to some certain value and the compression error doesn't vanish. It explains why algorithms without residual compression cannot converge linearly to the $\mathcal{O}(\sigma)$ neighborhood of the optimal solution in the strongly convex case.

\begin{figure*}[!ht]
\subfloat[Worker side]{
\begin{minipage}[t]{0.45\textwidth}
    \centering
    \includegraphics[width=0.9\textwidth]{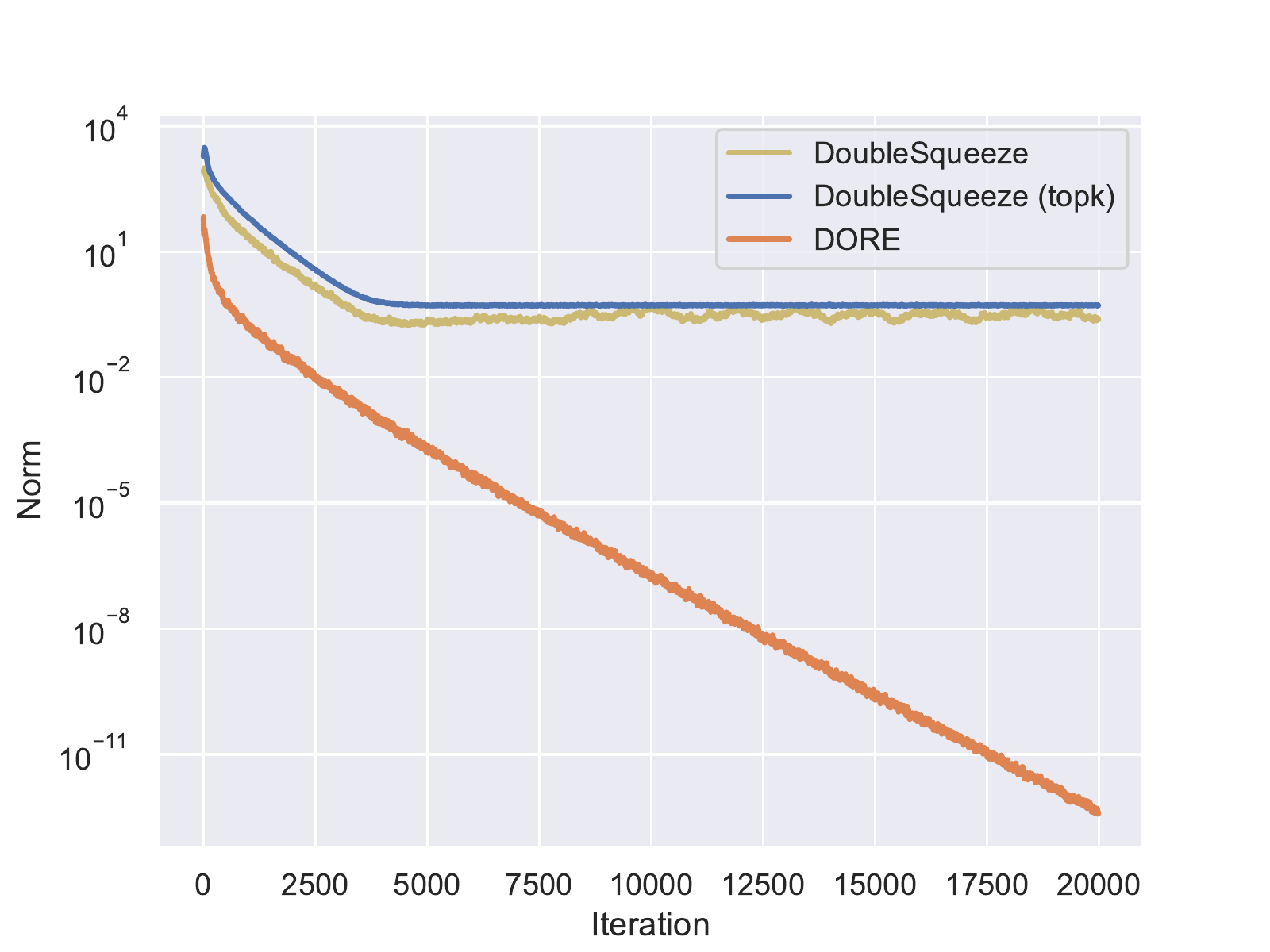}
\end{minipage}
}
\subfloat[Master side]{
\begin{minipage}[t]{0.45\textwidth}
    \centering
    \includegraphics[width=0.9\textwidth]{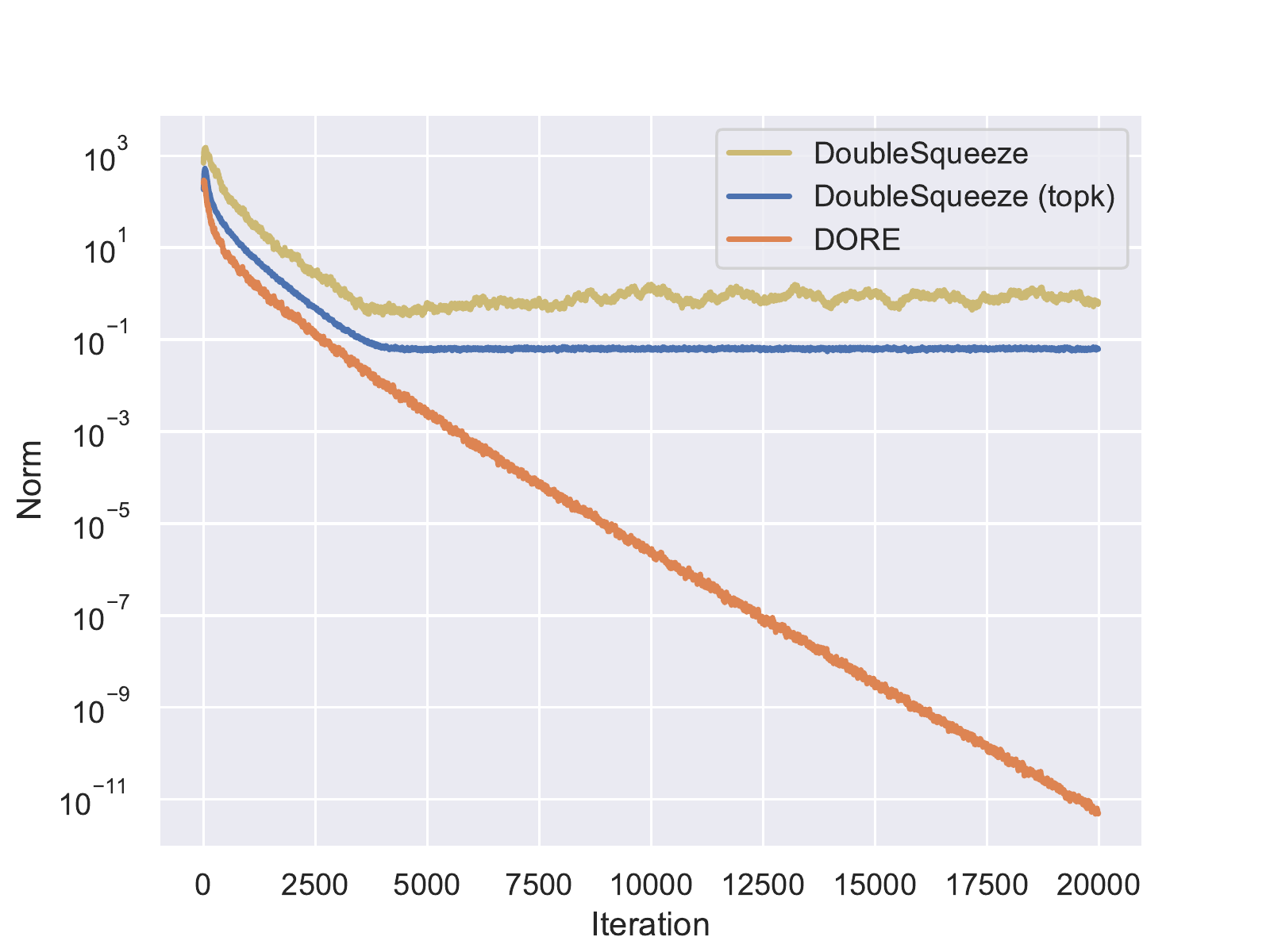}
\end{minipage}
}
\caption{The norm of variable being compressed in the linear regression experiment.} 
\label{fig:residual}
\end{figure*}

\subsection{Parameter sensitivity}\label{sec:parameter}

Continuing the MNIST experiment in Section~\ref{sec:experiment}, we further conduct parameter analysis on DORE. The basic setting for block size, learning rate, $\alpha$, $\beta$ and $\eta$ are 256, 0.1, 0.1, 1, 1, respectively. We change each parameter individually. Figures~\ref{fig:blocksize},~\ref{fig:alpha},~\ref{fig:beta}, and~\ref{fig:eta} demonstrate that DORE performs consistently well under different parameter settings.
\vspace{-0.2in}
\begin{figure*}[!ht]
\subfloat[Training loss]{
\begin{minipage}[t]{0.45\textwidth}
    \centering
    \includegraphics[width=0.9\textwidth]{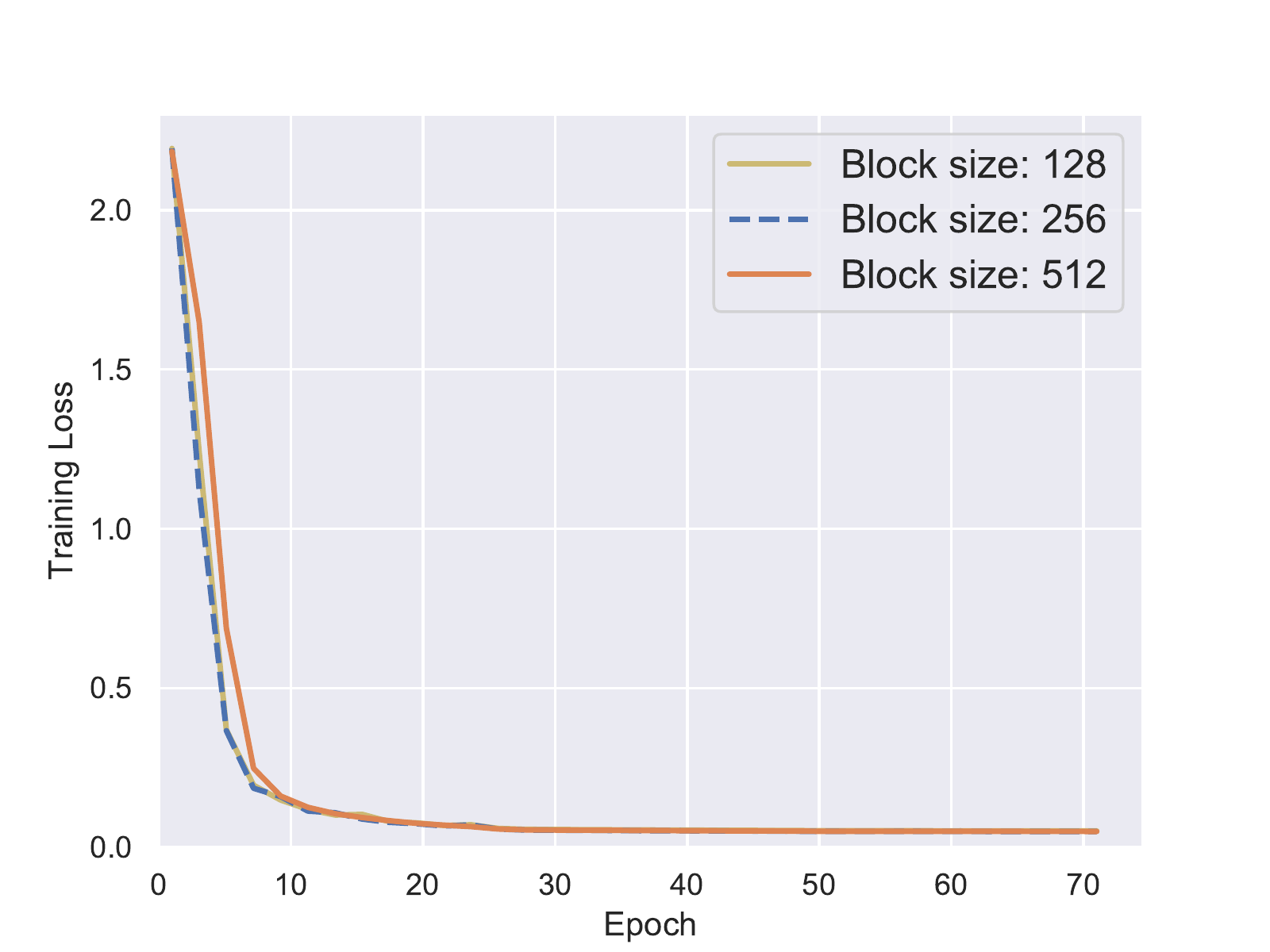}
\end{minipage}
}
\subfloat[Test loss]{
\begin{minipage}[t]{0.45\textwidth}
    \centering
    \includegraphics[width=0.9\textwidth]{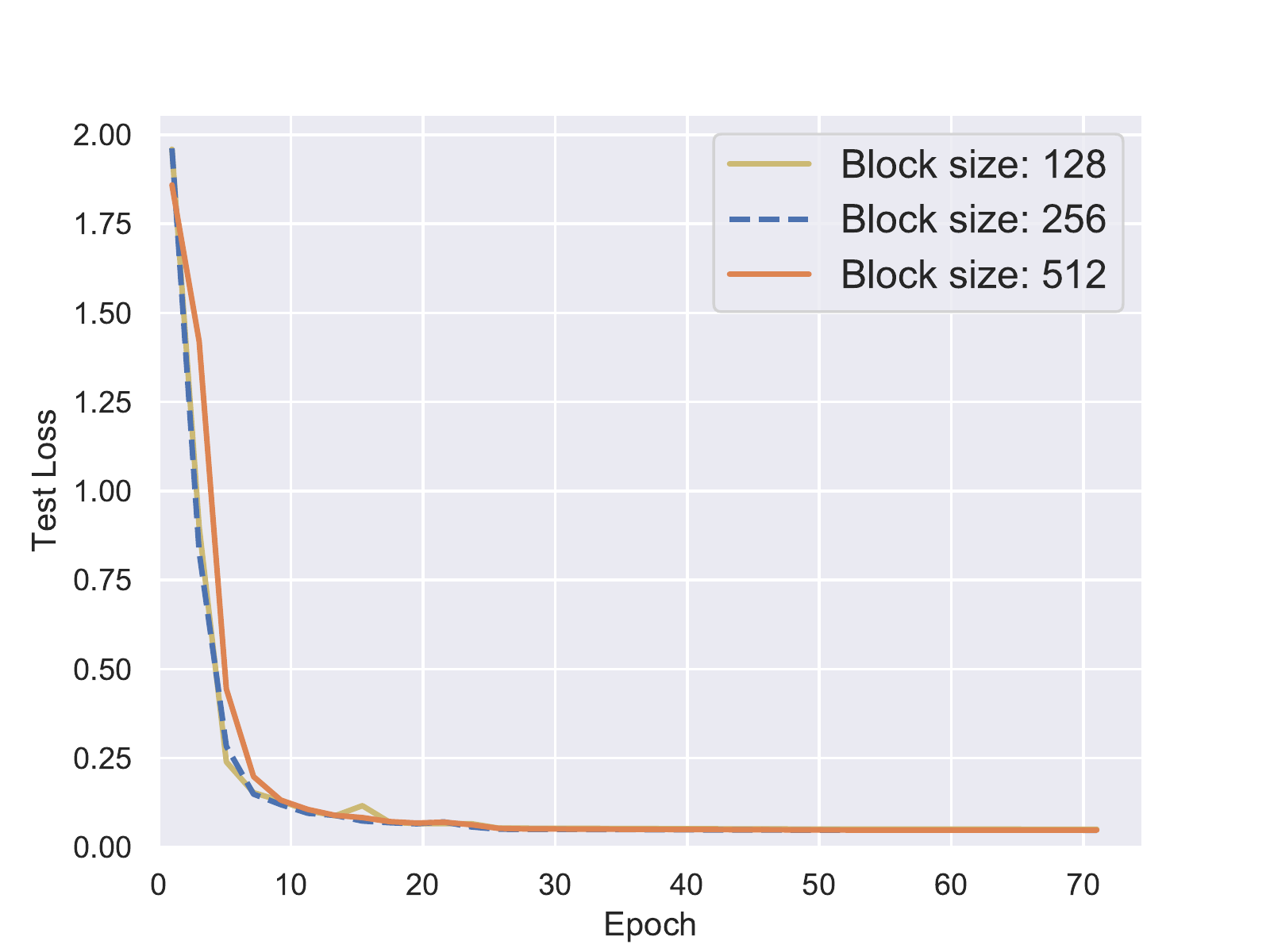}
\end{minipage}
}
\caption{Training under different compression block sizes.} 
\label{fig:blocksize}
\end{figure*}

\begin{figure*}[!ht]
\subfloat[Training loss]{
\begin{minipage}[t]{0.45\textwidth}
    \centering
    \includegraphics[width=0.9\textwidth]{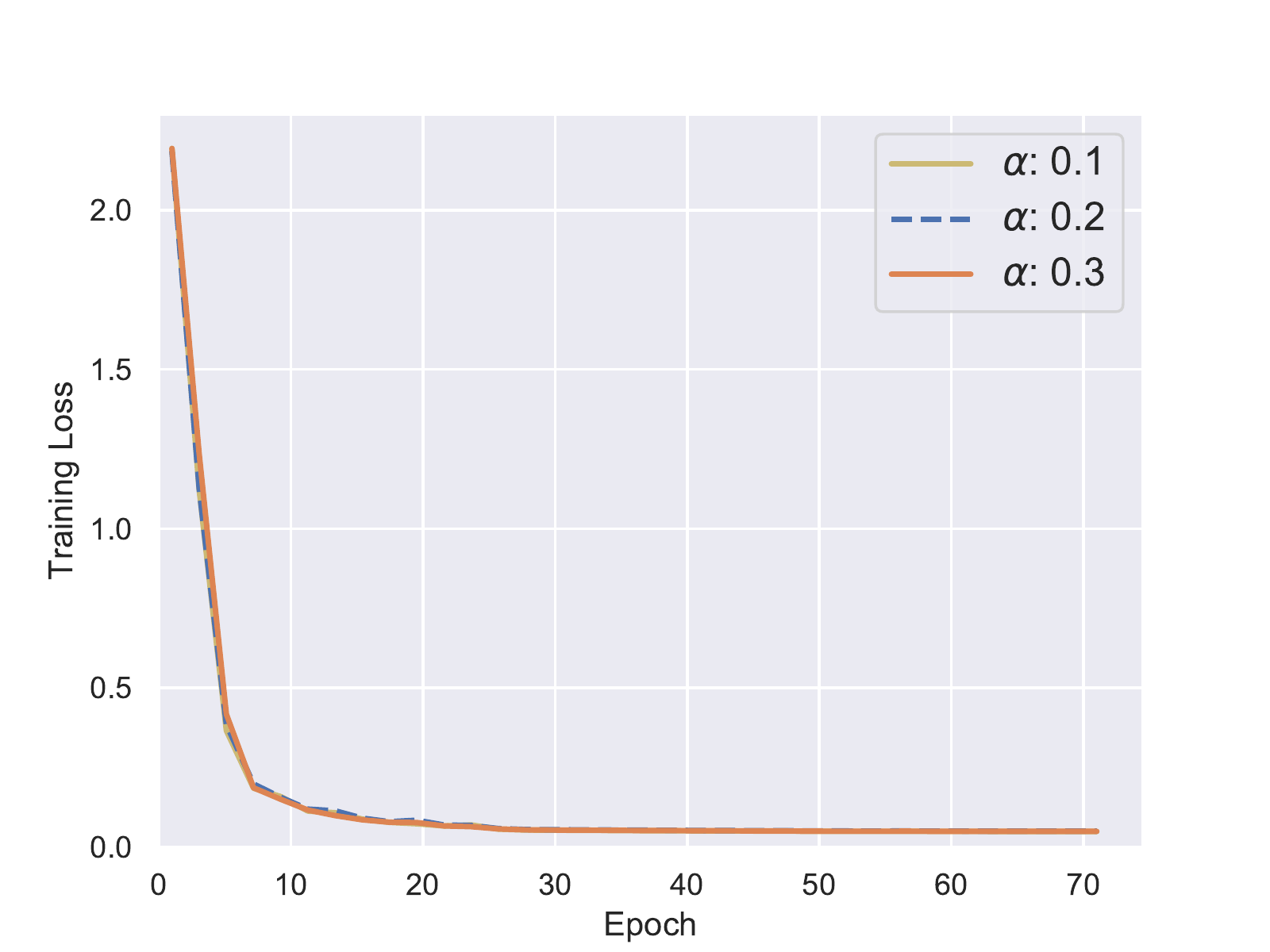}
\end{minipage}
}
\subfloat[Test loss]{
\begin{minipage}[t]{0.45\textwidth}
    \centering
    \includegraphics[width=0.9\textwidth]{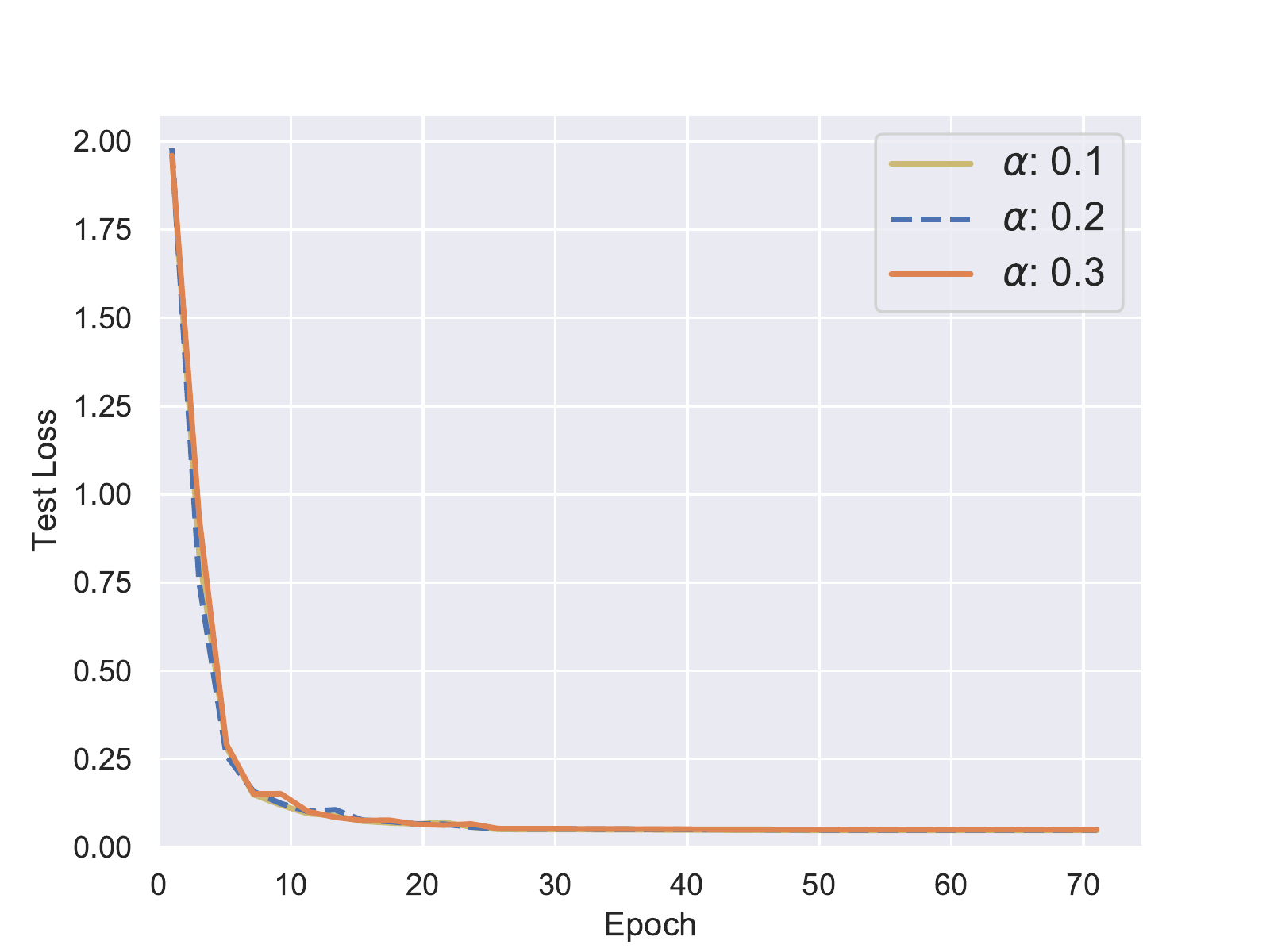}
\end{minipage}
}
\caption{Training under different $\alpha$} 
\label{fig:alpha}
\end{figure*}

\begin{figure*}[!ht]
\subfloat[Training loss]{
\begin{minipage}[t]{0.45\textwidth}
    \centering
    \includegraphics[width=0.9\textwidth]{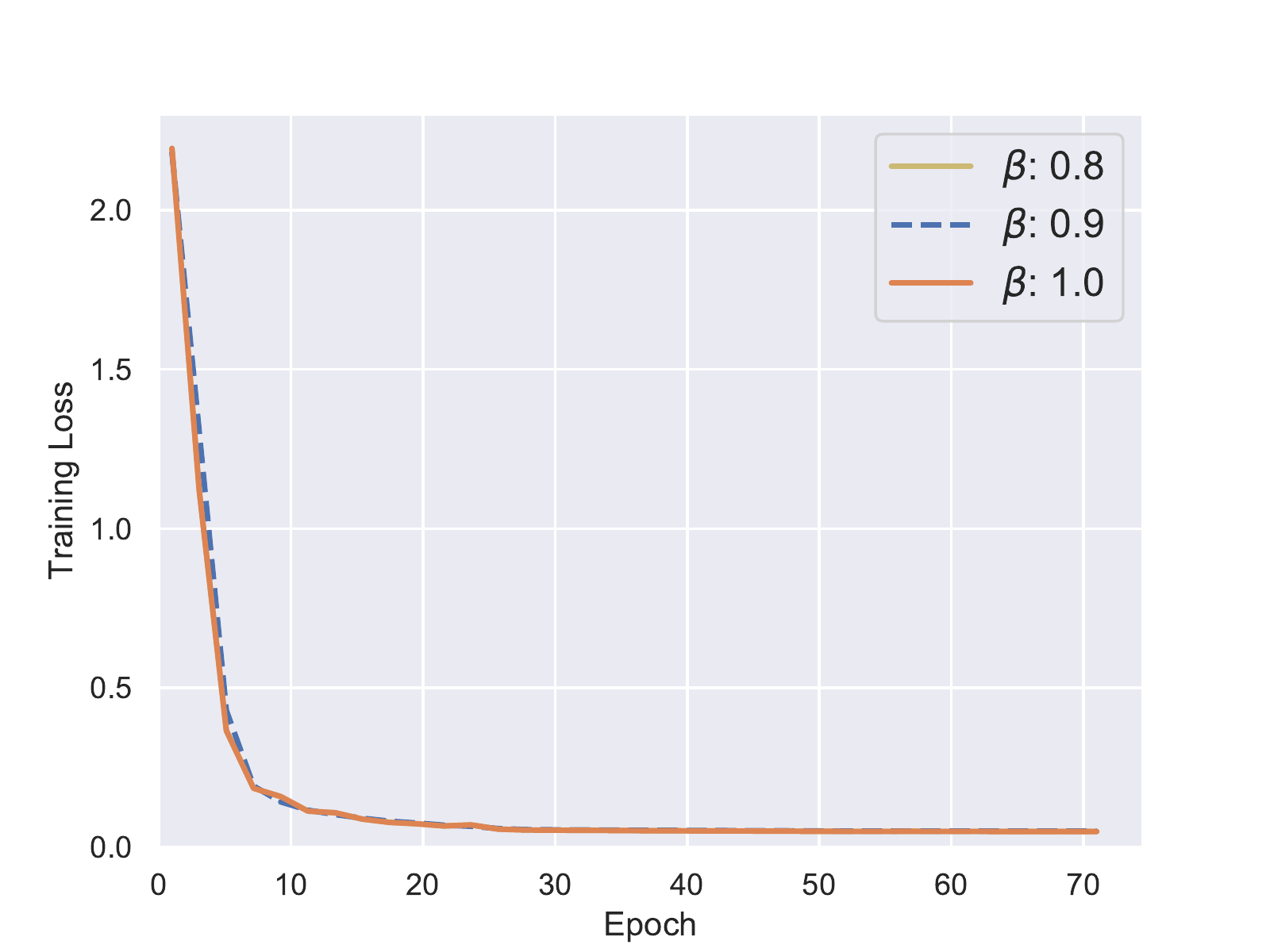}
\end{minipage}
}
\subfloat[Test loss]{
\begin{minipage}[t]{0.45\textwidth}
    \centering
    \includegraphics[width=0.9\textwidth]{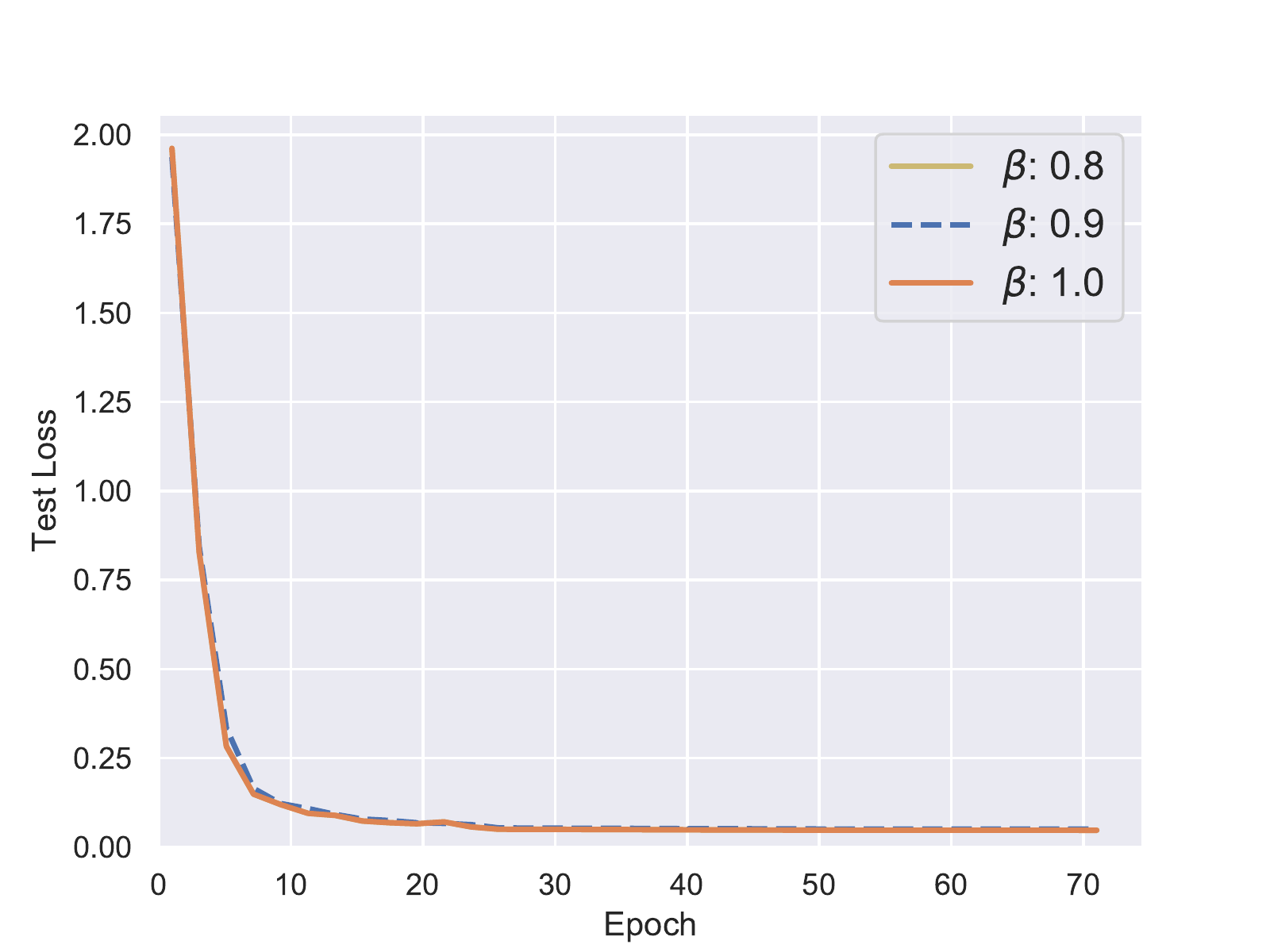}
\end{minipage}
}
\caption{Training under different $\beta$} 
\label{fig:beta}
\end{figure*}

\begin{figure*}[!ht]
\subfloat[Training loss]{
\begin{minipage}[t]{0.45\textwidth}
    \centering
    \includegraphics[width=0.9\textwidth]{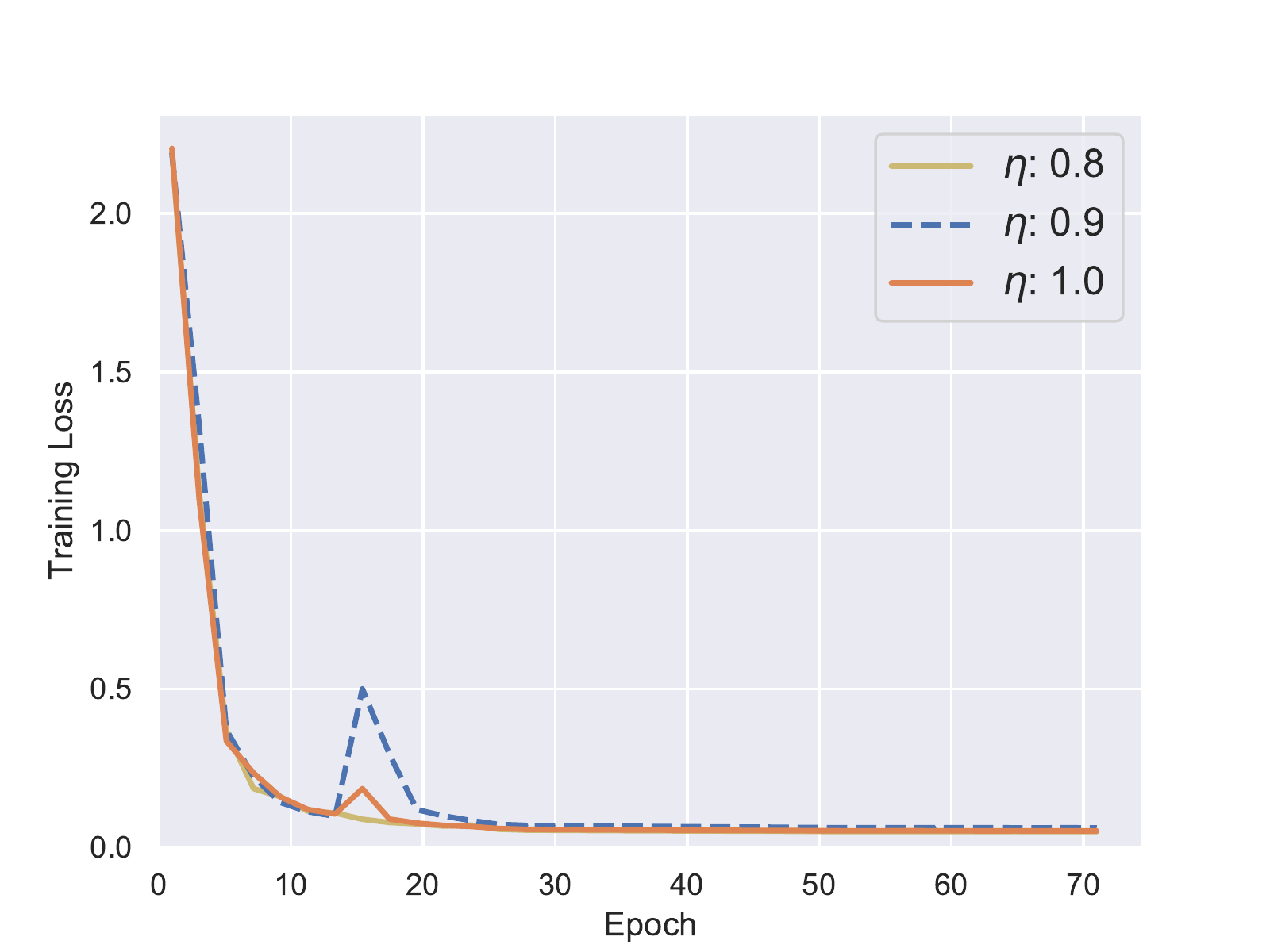}
\end{minipage}
}
\subfloat[Test loss]{
\begin{minipage}[t]{0.45\textwidth}
    \centering
    \includegraphics[width=0.9\textwidth]{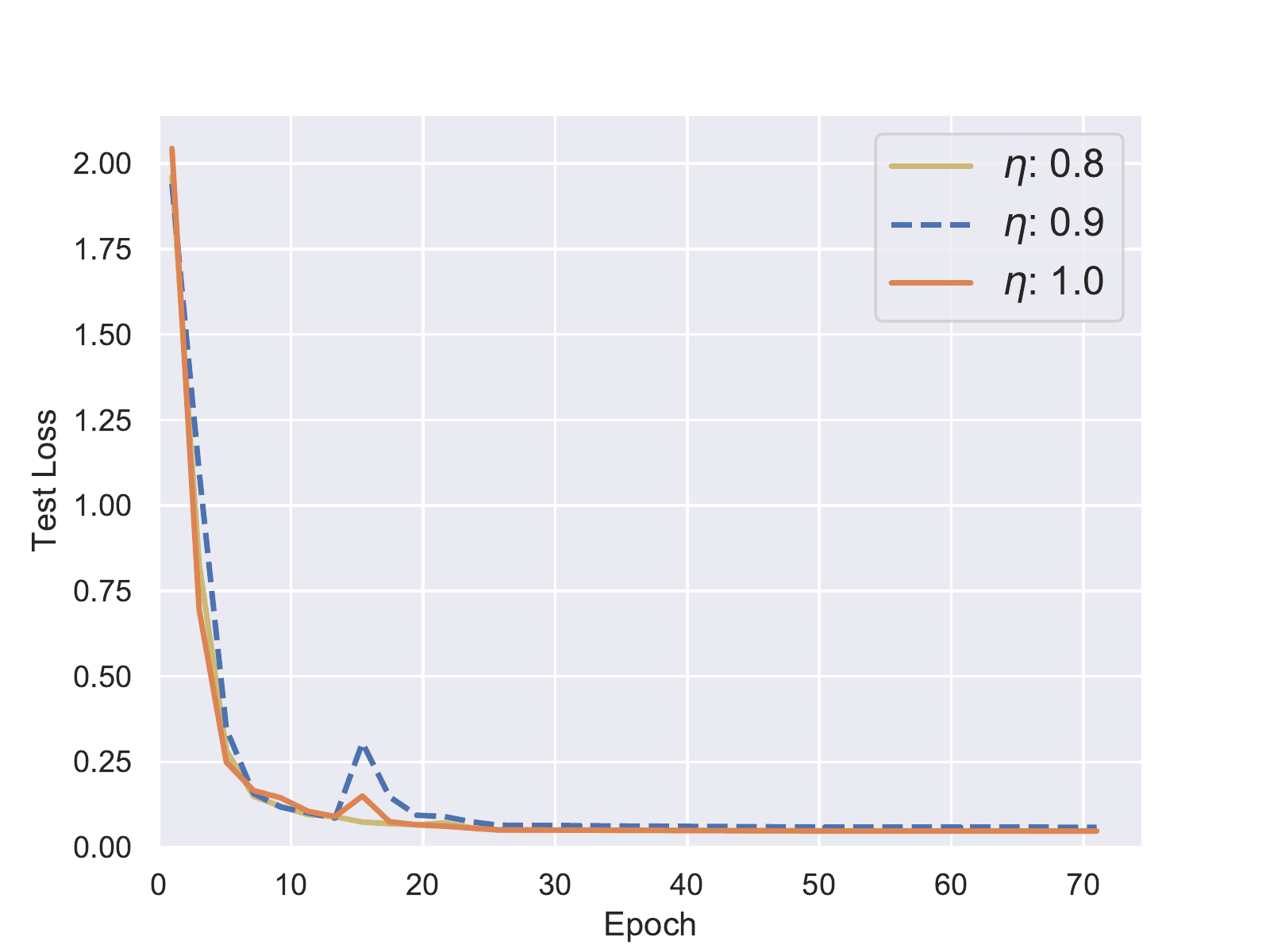}
\end{minipage}
}
\caption{Training under different $\eta$} 
\label{fig:eta}
\end{figure*}

\newpage
~
\newpage
\subsection{DORE in the smooth case}\label{sec:smoothcase}
\begin{algorithm}[!ht]
\caption{DORE with $R(\vx)=0$}\label{algosmooth}
\begin{algorithmic}[1]
\STATE \textbf{Input:} Stepsize $\alpha, \beta, \gamma, \eta$, initialize $\vh^0 = \vh_i^0 = \vzero^{d}$, $\hat{\vx}_i^0 = \hat{\vx}^0, ~\forall i\in\{1,\dots, n\}$. 
\FOR{$k=1,2,\cdots, K-1$}
\vspace{0.02in}
\begin{minipage}[t]{0.5\textwidth}
    \STATE \textbf{For each worker} $\{i=1,2,\cdots, n\}$:
    \vspace{0.05in}
    \STATE Sample $\vg_i^k$ such that $\EE [\vg_i^{k}|\hat{\vx}_i^{k}]=\nabla f_i(\hat{\vx}_i^k)$
    \STATE Gradient residual: $\Delta_i^k = \vg_i^k - \vh_i^k $
    \STATE Compression: $\hat{\Delta}_i^k = Q(\Delta_i^k)$
    \STATE $\vh_i^{k+1} = \vh_i^k + \alpha \hat{\Delta}_i^k$
`    \STATE $\{~\hat{\vg}_i^k = \vh_i^k + \hat{\Delta}_i^k~\}$
    \STATE Sent $\hat{\Delta}_i^k$ to the master
    \STATE Receive $\hat{\vq}^k$ from the master
    \STATE $\hat{\vx}_i^{k+1} = \hat{\vx}_i^k +\beta\hat{\vq}^k$
\end{minipage}
\begin{minipage}[t]{0.4\textwidth}
    \STATE \textbf{For the master}:
    \vspace{0.05in}
    \STATE Receive $\hat{\Delta}_{i}^{k}$s from workers
    \STATE $\hat{\Delta}^k = 1/n \sum_i^n {\hat{\Delta}_i^k}$
    \STATE $\hat{\vg}^k = \vh^k + \hat{\Delta}^k ~~\{=1/n \sum_i^n {\hat{\vg}_i^k}~\}$
    \STATE $\vh^{k+1} = \vh^k + \alpha \hat{\Delta}^k$ 
    \STATE $\vq^{k} = -\gamma {\hat\vg}^k + \eta \ve^k$ 
    \STATE Compression: $\hat{\vq}^k = Q(\vq^k)$
    \STATE $\ve^{k+1} = \vq^k - {\hat\vq}^k$ 
    \STATE Broadcast $\hat{\vq}^{k}$ to workers
\end{minipage}
\vspace{0.1in}
\ENDFOR
\STATE \textbf{Output:} any $\hat{\vx}_i^{K}$ \label{output}
\end{algorithmic}
\end{algorithm}

\subsection{Proof of Theorem~\ref{thm1}}
We first provide two lemmas. 
We define $\EE_Q$, $\EE_k$, and $\EE$ be the expectation taken over the quantization, the $k$th iteration based on $\hat\vx^k$, and the overall expectation, respectively.
\begin{lemma}\label{lem3}
For every $i$, we can estimate the first two moments of $\vh_{i}^{k+1}$ as
    \begin{align}\label{eq3}
    \EE_{Q}\vh_{i}^{k+1}=& (1-\alpha)\vh_{i}^{k}+\alpha \vg_{i}^{k},\\
    \EE_{Q}\|\vh_{i}^{k+1}-\vs_{i}\|^{2}\leq& (1-\alpha)\|\vh_{i}^{k}-\vs_{i}\|^{2}+\alpha\|\vg_{i}^{k}-\vs_{i}\|^{2}
    +\alpha[(C_{q}+1)\alpha-1]\|\Delta_{i}^{k}\|^{2}. \label{lemma2b}
    \end{align}
\end{lemma}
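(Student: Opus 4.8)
\textbf{Proof proposal for Lemma~\ref{lem3}.}

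The plan is to work directly from the update rule $\vh_i^{k+1} = \vh_i^k + \alpha \hat{\Delta}_i^k$ (line~\ref{ite3} of Algorithm~\ref{algocomplete}), where $\hat{\Delta}_i^k = Q(\Delta_i^k)$ and $\Delta_i^k = \vg_i^k - \vh_i^k$. First I would establish~\eqref{eq3}: taking $\EE_Q$ and using unbiasedness of $Q$ from Assumption~\ref{ass:compression}, $\EE_Q \hat{\Delta}_i^k = \Delta_i^k = \vg_i^k - \vh_i^k$, so $\EE_Q \vh_i^{k+1} = \vh_i^k + \alpha(\vg_i^k - \vh_i^k) = (1-\alpha)\vh_i^k + \alpha\vg_i^k$. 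Note $\vg_i^k$ and $\vh_i^k$ are constants with respect to $\EE_Q$ here, which is the expectation over the quantization at step $k$ only.

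For~\eqref{lemma2b}, I would write $\vh_i^{k+1} - \vs_i = \vh_i^k - \vs_i + \alpha\hat{\Delta}_i^k$ and expand the squared norm:
\begin{align*}
\EE_Q\|\vh_i^{k+1}-\vs_i\|^2 = \|\vh_i^k - \vs_i\|^2 + 2\alpha\dotp{\vh_i^k-\vs_i, \EE_Q\hat{\Delta}_i^k} + \alpha^2\EE_Q\|\hat{\Delta}_i^k\|^2.
\end{align*}
Using $\EE_Q\hat{\Delta}_i^k = \Delta_i^k$ for the cross term, and for the last term the decomposition $\EE_Q\|\hat{\Delta}_i^k\|^2 = \|\Delta_i^k\|^2 + \EE_Q\|\hat{\Delta}_i^k - \Delta_i^k\|^2 \leq (1+C_q)\|\Delta_i^k\|^2$ via~\eqref{quan_var}. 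Substituting $\Delta_i^k = \vg_i^k - \vh_i^k$ into the cross term gives $2\alpha\dotp{\vh_i^k - \vs_i, \vg_i^k - \vh_i^k}$. Then I would combine the deterministic terms: $\|\vh_i^k-\vs_i\|^2 + 2\alpha\dotp{\vh_i^k-\vs_i,\vg_i^k-\vh_i^k} + \alpha^2(1+C_q)\|\vg_i^k-\vh_i^k\|^2$. The goal is to rewrite this as $(1-\alpha)\|\vh_i^k-\vs_i\|^2 + \alpha\|\vg_i^k-\vs_i\|^2 + \alpha[(C_q+1)\alpha-1]\|\Delta_i^k\|^2$.

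The key algebraic identity I would use is the three-point expansion $\|\vg_i^k - \vs_i\|^2 = \|\vg_i^k - \vh_i^k\|^2 + 2\dotp{\vg_i^k - \vh_i^k, \vh_i^k - \vs_i} + \|\vh_i^k - \vs_i\|^2$, i.e., $\alpha\|\vg_i^k - \vs_i\|^2 = \alpha\|\Delta_i^k\|^2 + 2\alpha\dotp{\Delta_i^k, \vh_i^k-\vs_i} + \alpha\|\vh_i^k-\vs_i\|^2$. Matching this against the expanded expression, the cross terms cancel exactly, the $\|\vh_i^k-\vs_i\|^2$ coefficients reconcile as $1 = (1-\alpha) + \alpha$, and the $\|\Delta_i^k\|^2$ coefficients give $\alpha^2(1+C_q) = \alpha + \alpha[(C_q+1)\alpha - 1]$, which holds identically. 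This is a routine calculation with no real obstacle; the only thing to be careful about is bookkeeping the conditioning (everything is conditional on the iterates up through step $k$, so $\vg_i^k$ and $\vh_i^k$ are treated as fixed under $\EE_Q$), and the fact that~\eqref{lemma2b} is an inequality rather than equality solely because of the $\leq$ in~\eqref{quan_var}. The identity~\eqref{eq3} then follows as the special case noted above and also serves to justify the interpretation of $\vh_i^k$ as an exponential moving average in expectation.
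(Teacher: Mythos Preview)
Your proposal is correct and follows essentially the same approach as the paper. The only cosmetic difference is that the paper first applies the variance decomposition $\EE_Q\|X\|^2=\|\EE_Q X\|^2+\EE_Q\|X-\EE_Q X\|^2$ and then the convex-combination identity $\|\lambda\va+(1-\lambda)\vb\|^2=\lambda\|\va\|^2+(1-\lambda)\|\vb\|^2-\lambda(1-\lambda)\|\va-\vb\|^2$, whereas you expand the square directly and reconcile via the three-point identity; these are the same computation.
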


\begin{proof}
The first equality follows from lines~\ref{ite1}-\ref{ite3} of Algorithm~\ref{algocomplete} and Assumption~\ref{ass:compression}. 
For the second equation, we have the following variance decomposition 
\begin{equation}\label{var-decom}
    \EE\|X\|^{2}=\|\EE X\|^{2}+\EE\|X-\EE X\|^{2}
\end{equation}
for any random vector $X$.
By taking $X=\vh^{k+1}_{i}-\vs_i$, we get
\begin{equation}
    \EE_{Q}\|\vh_{i}^{k+1}-\vs_{i}\|^{2}=\|(1-\alpha)(\vh^{k}_{i}-\vs_{i})+\alpha(\vg^{k}_{i}-\vs_{i})\|^{2}+\alpha^{2}\EE_{Q}\|\hat{\Delta}^{k}_{i}-\Delta_{i}^{k}\|^{2}.
\end{equation}
Using the basic equality
\begin{equation}\label{basic_equality}
\|\lambda \va+(1-\lambda)\vb\|^{2}+\lambda(1-\lambda)\|\va-\vb\|^{2}=\lambda\|\va\|^{2}+(1-\lambda)\|\vb\|^{2}
\end{equation}
for all $\va,\vb\in\RR^{d}$ and $\lambda\in[0,1]$, as well as  Assumption~\ref{ass:compression}, we have 
\begin{align}
    \EE_{Q}\|\vh_{i}^{k+1}-\vs_{i}\|^{2}
    \leq(1-\alpha)\|\vh_{i}^{k}-\vs_{i}\|^{2}+\alpha\|\vg_{i}^{k}-\vs_{i}\|^{2}-\alpha(1-\alpha)\|\Delta_{i}^{k}\|^{2}+\alpha^{2}C_{q}\|\Delta_{i}^{k}\|^{2},
\end{align}
which is the inequality~\eqref{lemma2b}.
\end{proof}

Next, from the variance decomposition~\eqref{var-decom}, we also derive Lemma~\ref{lem4}.

\begin{lemma}\label{lem4}
The following inequality holds
\begin{align}\label{eq4}
    \EE[\|\hat{\vg}^{k}-\vh^{*}\|^{2}]\leq\EE\|\nabla f(\hat\vx^{k})-\vh^{*}\|^{2}+\frac{C_{q}}{n^{2}}\sum_{i=1}^{n}\EE\|\Delta_{i}^{k}\|^{2}+\frac{\sigma^{2}}{n},  
\end{align}
where $\vh^{*}=\nabla f(\vx^*)=\frac{1}{n}\sum_{i=1}^{n}\vh_{i}^{*} $ and $\sigma^{2}=\frac{1}{n}\sum_{i=1}^{n}\sigma_{i}^{2}$. 
\end{lemma}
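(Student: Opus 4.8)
\textbf{Proof plan for Lemma~\ref{lem4}.}
The quantity to control is $\EE\|\hat\vg^k - \vh^*\|^2$, where $\hat\vg^k = \vh^k + \hat\Delta^k = \vh^k + \frac1n\sum_i \hat\Delta_i^k$ and $\vh^* = \frac1n\sum_i \nabla f_i(\vx^*)$. The natural first move is to apply the variance decomposition~\eqref{var-decom} with $X = \hat\vg^k$, conditioning on the randomness up to iteration $k$ (i.e. on $\hat\vx^k$ and the sampled $\{\vg_i^k\}$, so that only the quantization randomness remains). Since $\EE_Q \hat\Delta_i^k = \Delta_i^k = \vg_i^k - \vh_i^k$, we get $\EE_Q \hat\vg^k = \vh^k + \frac1n\sum_i(\vg_i^k - \vh_i^k) = \frac1n\sum_i \vg_i^k =: \vg^k$. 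Hence $\EE_Q\|\hat\vg^k - \vh^*\|^2 = \|\vg^k - \vh^*\|^2 + \EE_Q\|\hat\vg^k - \vg^k\|^2$.

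For the fluctuation term, $\hat\vg^k - \vg^k = \frac1n\sum_i(\hat\Delta_i^k - \Delta_i^k)$, and the quantizations at distinct workers are independent with mean zero, so the cross terms vanish and $\EE_Q\|\hat\vg^k - \vg^k\|^2 = \frac1{n^2}\sum_i \EE_Q\|\hat\Delta_i^k - \Delta_i^k\|^2 \le \frac{C_q}{n^2}\sum_i \|\Delta_i^k\|^2$ by Assumption~\ref{ass:compression}. This gives exactly the middle term on the right-hand side (after taking full expectation). For the first term, take expectation over the gradient sampling: again by the variance decomposition, $\EE_k\|\vg^k - \vh^*\|^2 = \|\EE_k \vg^k - \vh^*\|^2 + \EE_k\|\vg^k - \EE_k\vg^k\|^2$. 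By Assumption~\ref{asm1}, $\EE_k \vg^k = \frac1n\sum_i \nabla f_i(\hat\vx^k) = \nabla f(\hat\vx^k)$, and the variance term is $\frac1{n^2}\sum_i \EE_k\|\vg_i^k - \nabla f_i(\hat\vx^k)\|^2 \le \frac1{n^2}\sum_i \sigma_i^2 = \sigma^2/n$ using independence of the $n$ samples across workers. Combining the two decompositions and taking the overall expectation yields~\eqref{eq4}.

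The only subtlety — and the step I would be most careful about — is bookkeeping the conditional expectations and independence structure: one must check that the per-worker quantization noises are mutually independent (so the $\frac1{n^2}\sum_i$ rather than $\frac1n\sum_i$ scaling is legitimate), that they are independent of the gradient samples, and that the samples $\vg_i^k$ across workers are independent so the sampling variance also contracts by $1/n$. Everything else is a mechanical double application of $\EE\|X\|^2 = \|\EE X\|^2 + \EE\|X - \EE X\|^2$ together with Assumptions~\ref{ass:compression} and~\ref{asm1}; no strong convexity or smoothness is needed here. There is no real obstacle beyond this careful conditioning, so the proof should be short.
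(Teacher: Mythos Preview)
Your proposal is correct and follows essentially the same route as the paper: two applications of the variance decomposition~\eqref{var-decom}, first peeling off the quantization noise (yielding the $\frac{C_q}{n^2}\sum_i\|\Delta_i^k\|^2$ term via Assumption~\ref{ass:compression} and independence across workers) and then the sampling noise (yielding $\sigma^2/n$ via Assumption~\ref{asm1} and independence across workers). If anything, you are more explicit than the paper about the independence structure needed for the $1/n^2$ and $1/n$ scalings, which the paper uses silently.
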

\begin{proof}
By taking the expectation over the quantization of $\vg$, we have
\begin{align}\label{eq4_1}
    \EE\|\hat{\vg}^{k}-\vh^{*}\|^{2}&=\EE\|\vg^{k}-\vh^{*}\|^{2}+\EE\|\hat{\vg}^{k}-\vg^{k}\|^{2}\nonumber\\
    &\leq \EE\|\vg^{k}-\vh^{*}\|^{2}+\frac{C_{q}}{n^{2}}\sum_{i=1}^{n}\EE\|\Delta^{k}_{i}\|^{2},
\end{align}
where the inequality is from Assumption~\ref{ass:compression}.

For $\|\vg^{k}-\vh^{*}\|,$ we take the expectation over the sampling of gradients and derive
\begin{align}\label{eq4_2}
    \EE\|\vg^{k}-\vh^{*}\|^{2}&=\EE\|\nabla f(\hat{\vx}^{k})-\vh^{*}\|^{2}+\EE\|\vg^{k}-\nabla f(\hat\vx^{k})\|^{2}\nonumber\\
    &\leq \EE\|\nabla f(\hat\vx^{k})-\vh^{*}\|^{2}+\frac{\sigma^{2}}{n}
\end{align} 
by Assumption~\ref{asm1}. 

Combining~\eqref{eq4_1} with~\eqref{eq4_2} gives~\eqref{eq4}.
\end{proof}

\begin{proof}[Proof of Theorem~\ref{thm1}]
We consider $\vx^{k+1}-\vx^{*}$ first. Since $\vx^{*}$ is the solution of~\eqref{pb1}, it satisfies
\begin{equation}
    \vx^{*}=\prox_{\gamma R}(\vx^{*}-\gamma \vh^{*}).
\end{equation}
Hence
\begin{align}\label{eq5}
    \EE\|\vx^{k+1}-\vx^{*}\|^{2}= &\EE\|\prox_{\gamma R}(\hat{\vx}^{k}-\gamma\hat{\vg}^{k})-\prox_{\gamma R}(\vx^{*}-\gamma \vh^{*})\|^{2}\nonumber\\
    \leq& \EE\|\hat{\vx}^{k}-\vx^{*}-\gamma(\hat{\vg}^{k}-\vh^{*})\|^{2}\nonumber\\
    =& \EE\|\hat{\vx}^{k}-\vx^{*}\|^{2}-2\gamma\EE\dotp{\hat{\vx}^{k}-\vx^{*}, \hat{\vg}^{k}-\vh^{*}}+\gamma^{2}\EE\|\hat{\vg}^{k}-\vh^{*}\|^{2}\nonumber\\
    =& \EE\|\hat{\vx}^{k}-\vx^{*}\|^{2}-2\gamma\EE\dotp{\hat{\vx}^{k}-\vx^{*}, \nabla f(\hat{\vx}^{k})-\vh^{*}}+\gamma^{2}\EE\|\hat{\vg}^{k}-\vh^{*}\|^{2},
\end{align}
where the inequality comes from the non-expansiveness of the proximal operator and the last equality is derived by taking the expectation of the stochastic gradient $\hat{\vg}^{k}$.
Combining~\eqref{eq4} and~\eqref{eq5}, we have
\begin{align}\label{eq7}
    \EE\|\vx^{k+1}-\vx^{*}\|^{2}\leq&\EE\|\hat{\vx}^{k}-\vx^{*}\|^{2}-2\gamma\EE\dotp{\hat{\vx}^{k}-\vx^{*}, \nabla f(\hat{\vx}^{k})-\vh^{*}}\nonumber\\
    &+\frac{\gamma^{2}}{n}\sum_{i=1}^{n}\EE\|\nabla f_{i}(\hat{\vx}^{k})-\vh_{i}^{*}\|^{2}+\frac{C_{q}\gamma^{2}}{n^{2}}\sum_{i=1}^{n}\EE\|\Delta_{i}^{k}\|^{2}+\frac{\gamma^{2}}{n}\sigma^{2}.
\end{align}

Then we consider $\EE\|\hat{\vx}^{k+1}-\vx^{*}\|^{2}$. According to Algorithm~\ref{algocomplete}, we have:
\begin{align}
\EE_Q[\hat{\vx}^{k+1}-\vx^{*}] &= \hat{\vx}^{k}+\beta \vq^k-\vx^{*}\nonumber\\
&=  (1-\beta)(\hat{\vx}^{k}-\vx^*)+\beta(\vx^{k+1}-\vx^*+\eta\ve^k)
\end{align}
where the expectation is taken on the quantization of $\vq^k.$

By variance decomposition~\eqref{var-decom} and the basic equality~\eqref{basic_equality},
\begin{align}
  &\EE\|\hat{\vx}^{k+1}-\vx^{*}\|^{2}\nonumber\\
  \leq&(1-\beta)\EE\|\hat{\vx}^{k}-\vx^{*}\|^{2}+\beta\EE\|\vx^{k+1}+\eta\ve^k-\vx^{*}\|^{2}-\beta(1-\beta)\EE\|\vq^{k}\|^{2}+\beta^{2}C_{q}^m\EE\|\vq^{k}\|^{2}\nonumber\\
  \leq&(1-\beta)\EE\|\hat{\vx}^{k}-\vx^{*}\|^{2}+(1+\eta^2\epsilon)\beta\EE\|\vx^{k+1}-\vx^{*}\|^{2}-\beta(1-(C_q^m+1)\beta)\EE\|\vq^k\|^2 \nonumber \\
  &+(\eta^2+\frac{1}{\epsilon})\beta C_q^m\EE\|\vq^{k-1}\|^2,
  \end{align}
where $\epsilon$ is generated from Cauchy inequality of inner product. 
For convenience, we let $\epsilon=\frac{1}{\eta}$.

Choose a $\beta$ such that $0<\beta\leq \frac{1}{1+C_q^m}$.
Then we have 
\begin{align}
        &\beta(1-(C_q^m+1)\beta)\EE\|\vq^k\|^2+\EE\|\hat{\vx}^{k+1}-\vx^{*}\|^{2} \nonumber\\
\leq    & (1-\beta)\EE\|\hat{\vx}^{k}-\vx^{*}\|^{2}+(1+ \eta)\beta    \EE\|\vx^{k+1}-\vx^{*}\|^{2}+(\eta^2+\eta)\beta C_q^m\EE\|\vq^{k-1}\|^2.
\end{align}
  
Letting $\vs_i=\vh_i^*$ in~\eqref{lemma2b}, we have
\begin{align}
        &\frac{(1+\eta)c\beta\gamma^{2}}{n}\sum_{i=1}^{n} \EE\|\vh^{k+1}_{i}-\vh_{i}^{*}\|^{2} \nonumber\\
\leq    & \frac{(1+\eta)(1-\alpha)c\beta\gamma^{2}}{n}\sum_{i=1}^{n}\|\vh_{i}^{k}-\vh^{*}_{i}\|^{2}+\frac{(1+\eta)\alpha c\beta\gamma^2}{n}\sum_{i=1}^{n}\|\vg_{i}^{k}-\vh^*_{i}\|^{2}\nonumber\\
 &+\frac{(1+\eta)\alpha[(C_{q}+1)\alpha-1]c\beta\gamma^2}{n}\sum_{i=1}^{n}\|\Delta_{i}^{k}\|^{2}.
\end{align}

Then we let $\vR^k=\beta(1-(C_q^m+1)\beta)\EE\|\vq^k\|^2$ and define $\vV^k=\vR^{k-1}+\EE\|\hat{\vx}^{k}-\vx^{*}\|^{2}+\frac{(1+\eta)c\beta\gamma^{2}}{n}\sum_{i=1}^{n}\EE\|\vh_{i}^{k}-\vh_{i}^{*}\|^{2}$. 
Thus, we obtain
\begin{align}\label{eq10}
  \vV^{k+1} 
  \leq&(\eta^2+\eta)\beta C_q^m\EE\|\vq^{k-1}\|^2+(1+\eta\beta) \EE\|\hat{\vx}^{k}-\vx^{*}\|^{2}-2(1+\eta)\beta\gamma\EE\dotp{\hat{\vx}^{k}-\vx^{*}, \nabla f(\hat{\vx}^{k})-\vh^{*}}\nonumber\\
  &+\frac{(1+\eta)(1-\alpha)c\beta\gamma^{2}}{n}\sum_{i=1}^{n}\EE\|\vh_{i}^{k}-\vh_{i}^{*}\|^{2}+\frac{(1+\eta)\beta\gamma^{2}}{n^{2}}\Big[nc (C_{q}+1)\alpha^{2}-nc\alpha+C_{q}\Big]\sum_{i=1}^{n}\EE\|\Delta_{i}^{k}\|^{2}\nonumber\\
  &+\frac{(1+\eta)(1+c\alpha)}{n}\beta\gamma^{2}\sum_{i=1}^{n}\EE\|\nabla f_{i}(\hat{\vx}^{k})-\vh_{i}^{*}\|^{2}+\frac{(1+\eta)(1+nc\alpha)}{n}\beta\gamma^{2}\sigma^{2}.
 \end{align}

The $\EE\|\Delta_{i}^{k}\|^{2}$-term can be ignored if $nc(C_{q}+1)\alpha^{2}-nc\alpha+C_{q}\leq0$, which can be guaranteed by $c\geq\frac{4C_{q}(C_{q}+1)}{n}$ and 
$$\alpha\in \left({1- \sqrt{1-{4C_q(C_q+1)\over nc}}\over 2(C_q+1)} , {1+ \sqrt{1-{4C_q(C_q+1)\over nc}}\over 2(C_q+1)}\right).$$


Given that each $f_{i}$ is $L$-Lipschitz differentiable and $\mu$-strongly convex, we have 
\begin{align}\label{eq11}
    \EE\dotp{\nabla f(\hat{\vx}^{k})-\vh^{*}, \hat{\vx}^{k}-\vx^{*}}
\geq\frac{\mu L}{\mu+L}\EE\|\hat{\vx}^{k}-\vx^{*}\|^{2}+\frac{1}{\mu+L}\frac{1}{n}\sum_{i=1}^{n}\EE\|\nabla f_{i}(\hat{\vx}^{k})-\vh_{i}^{*}\|^{2}.
\end{align}
Hence
  \begin{align}
  \vV^{k+1}\leq& \rho_1\vR^{k-1}+(1+\eta\beta)\EE\|\hat{\vx}^{k}-\vx^{*}\|^{2}-2(1+\eta)\beta\gamma\EE\dotp{\hat{\vx}^{k}-\vx^{*}, \nabla f(\hat{\vx}^{k})-\vh^{*}}\nonumber\\
  &+\frac{(1+\eta)(1-\alpha)c\beta\gamma^{2}}{n}\sum_{i=1}^{n}\EE\|\vh_{i}^{k}-\vh_{i}^{*}\|^{2}+\frac{(1+\eta)(1+c\alpha)}{n}\beta\gamma^{2}\sum_{i=1}^{n}\EE\|\nabla f_{i}(\hat{\vx}^{k})-\vh_{i}^{*}\|^{2} \nonumber \\
  &+\frac{(1+\eta)(1+nc\alpha)}{n}\beta\gamma^{2}\sigma^{2}\nonumber\\
  \leq& \rho_1\vR^{k-1}+\Big[ 1+\eta\beta - \frac{2(1+\eta)\beta\gamma \mu L}{\mu +L}\Big] \EE\|\hat{\vx}^{k}-\vx^{*}\|^{2}+\frac{(1+\eta)(1-\alpha)c\beta\gamma^{2}}{n}\sum_{i=1}^{n}\EE\|\vh_{i}^{k}-\vh_{i}^{*}\|^{2}\nonumber \\
  &+\Big[(1+\eta)(1+c\alpha)\beta\gamma^{2} - \frac{2(1+\eta)\beta \gamma}{\mu+L}\Big]\frac{1}{n} \sum_{i=1}^{n}\EE\|\nabla f_{i}(\hat{\vx}^{k})-\vh_{i}^{*}\|^{2}+\frac{(1+\eta)(1+nc\alpha)}{n}\beta\gamma^{2}\sigma^{2}\nonumber\\
  \leq& \rho_1\vR^{k-1}+\rho_2\EE\|\hat{\vx}^{k}-\vx^{*}\|^{2}+\frac{(1+\eta)(1-\alpha)c\beta\gamma^{2}}{n}\sum_{i=1}^{n}\EE\|\vh_{i}^{k}-\vh_{i}^{*}\|^{2}+\frac{(1+\eta)(1+nc\alpha)}{n}\beta\gamma^{2}\sigma^{2}
  \end{align}
where 
\begin{align*}
\rho_1 = & \frac{(\eta^2+\eta)C_q^m}{1-(C_q^m+1)\beta}, \\
\rho_2 = & 1+\eta\beta-\frac{2(1+\eta)\beta\gamma\mu L}{\mu+L}.
\end{align*}
Here we let $\gamma \leq \frac{2}{(1+c\alpha)(\mu+L)}$ such that $(1+\eta)(1+c\alpha)\beta\gamma^{2} - \frac{2(1+\eta)\beta \gamma}{\mu+L} \leq 0$ and the last inequality holds. In order to get $\max(\rho_1, \rho_2, 1-\alpha) < 1$, we have the following conditions
\begin{align*}
    0 \leq (\eta^2+\eta)C_q^m  \leq  & 1-(C_q^m+1)\beta,  \\
    \eta <  &\frac{2(1+\eta)\gamma\mu L}{\mu+L}.
\end{align*}
Therefore, the condition for $\gamma$ is 
$${\eta(\mu+L)\over 2(1+\eta)\mu L}\leq \gamma\leq {2\over (1+c\alpha)(\mu+L)},$$
which implies an additional condition for $\eta$. 
Therefore, the condition for $\eta$ is 
$$\eta\in \left[0,\min\left({-C_q^m+ \sqrt{(C_q^m)^2+4(1-(C_q^m+1)\beta)}\over 2C_q^m}, {4\mu L\over(\mu+L)^2(1+c\alpha)-4\mu L}\right)\right).$$

where $\eta \leq {4\mu L\over(\mu+L)^2(1+c\alpha)-4\mu L}$ is to ensure
${\eta(\mu+L)\over 2(1+\eta)\mu L}\leq {2\over (1+c\alpha)(\mu+L)}$ such that we don't get an empty set for $\gamma$.

If we define $\rho = \max \{\rho_1, \rho_2, 1-\alpha \},$ we obtain
 \begin{equation}\label{ineq_V}
     \vV^{k+1}\leq \rho\vV^k+\frac{(1+\eta)(1+nc\alpha)}{n}\beta\gamma^2\sigma^2
 \end{equation}
 and the proof is completed by applying~\eqref{ineq_V} recurrently.
\end{proof}

\subsection{Proof of Theorem~\ref{thm3}}
\begin{proof}In Algorithm~\ref{algosmooth}, we can show 
\begin{equation}\label{equ:x_k}
    \begin{aligned}
    \EE\|\hat{\vx}^{k+1}-\hat{\vx}^{k}\|^{2}&=\beta^2\EE\|\hat{\vq}^k\|^2
    =\beta^2\EE\|\EE\hat{\vq}^k\|^2+\beta^2\EE\|\hat{\vq}^k-\EE\hat{\vq}^k\|^2\\
    &=\beta^2\EE\|{\vq}^k\|^2+\beta^2\EE\|\hat{\vq}^k-{\vq}^k\|^2\\
    &\leq (1+C_q^m)\beta^2\EE\|{\vq}^k\|^2.
    \end{aligned}
\end{equation}
and 
\begin{equation}\label{equ:q_k}
 \EE\|\vq^{k}\|^2=\EE\|-\gamma\hat{\vg}^{k}+\eta\ve^{k}\|^2\leq
 2\gamma^2\EE\|\hat{\vg}^{k}\|^2+2\eta^2\EE\|\ve^k\|^2 \leq \\
 2\gamma^2\EE\|\hat{\vg}^{k}\|^2+2C_q^m\eta^2\EE\|\vq^{k-1}\|^2. 
\end{equation}

Using~(\ref{equ:x_k})(\ref{equ:q_k}) and the Lipschitz continuity of $\nabla f(\vx)$, we have
\begin{align}\label{equ:lip}
  &\EE f(\hat{\vx}^{k+1}) + (C_q^m+1)L\beta^2\EE\|\vq^{k}\|^2 \nonumber \\
  \leq& \EE f(\hat{\vx}^{k})+\EE\dotp{\nabla f(\hat{\vx}^{k}), \hat{\vx}^{k+1}-\hat{\vx}^{k}}+\frac{L}{2}\EE\|\hat{\vx}^{k+1}-\hat{\vx}^{k}\|^{2} + (C_q^m+1)L\beta^2\EE\|\vq^{k}\|^2 \nonumber \\
  = &\EE f(\hat{\vx}^{k})+\beta \EE\dotp{\nabla f(\hat{\vx}^{k}), -\gamma\hat{\vg}^{k}+\eta \ve^{k}}+\frac{(1+C_q^m)L\beta^2}{2}\EE\|{\vq}^k\|^2 + (C_q^m+1)L\beta^2\EE\|\vq^{k}\|^2 \nonumber\\
  =& \EE f(\hat{\vx}^{k})+\beta \EE\dotp{\nabla f(\hat{\vx}^{k}), -\gamma\nabla f(\hat{\vx}^{k})+\eta \ve^{k}} + \frac{3(C_q^m+1)L\beta^2}{2}\EE\|\vq^{k}\|^2 \nonumber\\
  \leq&\EE f(\hat{\vx}^{k})-\beta\gamma\EE\|\nabla f(\hat{\vx}^{k})\|^{2}+\frac{\beta\eta}{2}\EE\|\nabla f(\hat{\vx}^{k})\|^2 + \frac{\beta\eta}{2} \EE\|\ve^{k}\|^2 \nonumber\\
  & ~~~~+ 3(C_q^m+1)L\beta^2 \Big[\gamma^2\EE\|\hat{\vg}^{k}\|^2+C_q^m\eta^2\EE\|\vq^{k-1}\|^2\Big]  \nonumber\\
  \leq&\EE f(\hat{\vx}^{k})-\Big[\beta\gamma-\frac{\beta\eta}{2}-3(C_{q}^{m}+1)L\beta^{2}\gamma^2\Big]\EE\|\nabla f(\hat{\vx}^{k})\|^{2}\nonumber\\
  &+\frac{3C_{q}(C^{m}_{q}+1)L\beta^{2}\gamma^2}{n^{2}}\sum_{i=1}^{n}\EE\|\Delta^{k}_{i}\|^2+\frac{3(C_{q}^m+1)L\beta^{2}\gamma^2}{n}\sigma^{2}\nonumber\\
  &+\Big[\frac{\beta\eta C_q^m}{2}+(3C_{q}^m+1)C_q^m L\beta^{2}\eta^2 \Big] \EE\|\vq^{k-1}\|^{2},
\end{align}
where the last inequality is from~\eqref{eq4} with $\vh^{*}=\vzero$.

Letting $\vs_i=\vzero$ in~\eqref{lemma2b}, we have
\begin{align}\label{noncon_eq1}
    \EE_{Q}\|\vh_{i}^{k+1}\|^{2}\leq& (1-\alpha)\|\vh_{i}^{k}\|^{2}+\alpha\|\vg_{i}^{k}\|^{2}+\alpha[(C_{q}+1)\alpha-1]\|\Delta_{i}^{k}\|^{2}.
\end{align}

Due to the assumption that each worker samples the gradient from the full dataset, we have \begin{equation}\label{noncon_eq2}
    \EE\vg^{k}_{i}=\EE\nabla f(\hat{\vx}^{k}),\quad \EE\|\vg_i^{k}\|^{2}\leq\EE\|\nabla f(\hat{\vx}^{k})\|^{2}+\sigma^{2}_{i}.
\end{equation}

Define $\Lambda^{k}=(C_q^m+1)L\beta^2\|\vq^{k-1}\|^2+f(\hat{\vx}^{k})-f^{*}+3c(C_{q}^{m}+1)L\beta^{2}\gamma^2\frac{1}{n}\sum_{i=1}^{n}\EE\|\vh^{k}_{i}\|^{2},$ and from~\eqref{equ:lip}, \eqref{noncon_eq1}, and~\eqref{noncon_eq2}, we have
\begin{align}
  \EE\Lambda^{k+1}
  \leq&\EE f(\hat{\vx}^{k})-f^{*}+3(1-\alpha)c(C_{q}^{m}+1)L\beta^{2}\gamma^2\frac{1}{n}\sum_{i=1}^{n}\EE\|\vh^{k}_{i}\|^{2}\nonumber\\
  &-\Big[\beta\gamma-\frac{\beta\eta}{2}-3(1+c\alpha)(C_{q}^{m}+1)L\beta^{2}\gamma^2\Big]\EE\|\nabla f(\hat{\vx}^{k})\|^{2}\nonumber\\
  &+\frac{(C_{q}^{m}+1)L\beta^{2}\gamma^2}{n^{2}}\Big[3nc(C_{q}+1)\alpha^{2}-3nc\alpha+3C_{q}\Big]\sum_{i=1}^{n}\EE\|\Delta_i^k\|^2\nonumber\\
  &+3(1+nc\alpha)\frac{(C_{q}^m+1)L\beta^{2}\gamma^2\sigma^{2}}{n}\nonumber\\
  &+\Big[\frac{\beta\eta C_q^m}{2} + 3(C_{q}^m+1)C_q^m L\beta^{2}\eta^2 \Big] \EE\|\vq^{k-1}\|^{2}.
\end{align}

If we let $c=\frac{4C_q(C_q+1)}{n}$, then the condition of $\alpha$ in~\eqref{alpha-beta} gives $3nc(C_{q}+1)\alpha^{2}-3nc\alpha+3C_{q}\leq0$ and
\begin{align}
  \EE\Lambda^{k+1}
  \leq&\EE f(\hat{\vx}^{k})-f^{*}+3(1-\alpha)c(C_{q}^{m}+1)L\beta^2\gamma^2\frac{1}{n}\sum_{i=1}^{n}\EE\|\vh^{k}_{i}\|^{2}\nonumber  \\ 
  &-\Big[\beta\gamma-\frac{\beta\eta}{2}-3(1+c\alpha)(C_{q}^{m}+1)L\beta^{2}\gamma^2\Big]\EE\|\nabla f(\hat{\vx}^{k})\|^{2}\nonumber\\
  &+3(1+nc\alpha)\frac{(C_{q}^m+1)L\beta^{2}\gamma^2\sigma^{2}}{n}\nonumber\\
  &+[\frac{\beta\eta C_q^m}{2}+3(C_{q}^m+1)C_q^m L\beta^{2}\eta^2]\EE\|\vq^{k-1}\|^{2}.
\end{align}

Let $\eta=\gamma$ and $\beta\gamma \leq \frac{1}{6(1+c\alpha)(C_q^m+1)L}$, we have
\begin{equation*}
    \beta\gamma-\frac{\beta\eta}{2}-3(1+c\alpha)(C_{q}^{m}+1)L\beta^{2}\gamma^2 = 
    \frac{\beta\gamma}{2}-3(1+c\alpha)(C_{q}^{m}+1)L\beta^{2}\gamma^2 \geq 0.
\end{equation*}

Take $\gamma \leq \min\Big\{\frac{-1+\sqrt{1+ \frac{48L^2\beta^2(C_q^m+1)^2}{C_q^m}}}{12L\beta(C_q^m+1)}, \frac{1}{6L\beta(1+c\alpha)(C_q^m+1)}\Big\}$ will guarantee
\begin{equation*}
\Big[\frac{\beta\eta C_q^m}{2}+3(C_{q}^m+1)C_q^m L\beta^{2}\eta^2\Big]\leq(C_q^m+1)L\beta^2.
\end{equation*}

Hence we obtain
\begin{equation}\label{eq14}
    \EE\Lambda^{k+1}\leq\EE\Lambda^{k}-\Big[\frac{\beta\gamma}{2}-3(1+c\alpha)(C_{q}^{m}+1)L\beta^{2}\gamma^2\Big]\EE\|\nabla f(\hat{\vx}^{k})\|^{2}+3(1+nc\alpha)\frac{(C_{q}^m+1)L\beta^{2}\gamma^2\sigma^{2}}{n}.
\end{equation}

Taking the telescoping sum and plugging the initial conditions, we derive~\eqref{nonconvex-conv}.
\end{proof}

\subsection{Proof of Corollary~\ref{col2}}
\begin{proof}

With $\alpha=\frac{1}{2(C_{q}+1)}$ and $c=\frac{4C_{q}(C_{q}+1)}{n}$, $1+nc\alpha=1+2C_{q}$ is a constant.
We set $\beta=\frac{1}{C^{m}_{q}+1}$ and $\gamma=\min\Big\{ \frac{-1+\sqrt{1+\frac{48L^2}{ C_q^m}}}{12L}, \frac{1}{12L(1+c\alpha)(1+\sqrt{K/n})} \Big\}$. In general, $C_q^m$ is bounded which makes the first bound negligible, i.e.,  $\gamma = \frac{1}{12L(1+c\alpha)(1+\sqrt{K/n})}$ when $K$ is large enough. Therefore, we have
\begin{equation}
\frac{\beta}{2}-3(1+c\alpha)(C_{q}^{m}+1)L\beta^{2}\gamma
= \frac{1-6(1+c\alpha)L\gamma}{2(C_q^m+1)} \leq \frac{1}{4(C_q^m+1)}.
\end{equation}

From Theorem~\ref{thm3}, we derive
\begin{align}
  &\frac{1}{K}\sum_{k=1}^{K} \EE\|\nabla f(\hat{\vx}^{k})\|^{2} \nonumber\\
  \leq& \frac{4(C_q^m+1)(\EE\Lambda^{1}-\EE\Lambda^{K+1})}{\gamma K} + \frac{12(1+nc\alpha)L\sigma^2\gamma}{n} \nonumber\\
  \leq& 48L(C_q^m+1)(1+c\alpha)(\EE\Lambda^{1}-\EE\Lambda^{K+1})({1\over K}+{1\over \sqrt{nK}}) + \frac{(1+nc\alpha)\sigma^{2}}{(1+c\alpha)}\frac{1}{\sqrt{nK}},
\end{align}
which completes the proof.

\end{proof}

\end{document}